\newtheorem{lemma}{Lemma}
\newcommand{\R}{\mathbb{R}}
\newcommand{\inner}[1]{\left\langle#1\right\rangle}
\def\R{\mathbb{R}}
\newcommand{\norm}[1]{\left\|#1\right\|}
\definecolor{update}{rgb}{0.5, 0., 0.5}
\newif\ifpaper
\newif\ifrebuttal
\newcommand{\method}{ProoD}
\title{Provably Adversarially Robust Detection of Out-of-Distribution Data  (Almost) for Free}
\author{
 Alexander Meinke\thanks{Corresponding author - \texttt{alexander.meinke@uni-tuebingen.de} }\\
  University of Tübingen \\
  Tübingen AI Center
  \And
  Julian Bitterwolf \\
  University of Tübingen \\
  Tübingen AI Center
  \And
  Matthias Hein \\
  University of Tübingen \\
  Tübingen AI Center
}
\begin{document}
\maketitle

\begin{abstract}
The application of machine learning in safety-critical systems requires a reliable assessment of uncertainty.
However, deep neural networks are known to produce highly overconfident predictions on out-of-distribution (OOD) data.
Even if trained to be non-confident on OOD data, one can still adversarially manipulate OOD data so that the classifier again assigns high confidence to the manipulated samples.
We show that two previously published defenses can be broken by better adapted attacks, highlighting the importance of robustness guarantees around OOD data.
Since the existing method for this task is hard to train and significantly limits accuracy, we construct a classifier that can simultaneously achieve provably adversarially robust OOD detection and high clean accuracy.
Moreover, by slightly modifying the classifier's architecture our method provably avoids the asymptotic overconfidence problem of standard neural networks.
We provide code for all our experiments.\footnote[2]{\url{https://github.com/AlexMeinke/Provable-OOD-Detection}}
\end{abstract}

\section{Introduction}
Deep neural networks have achieved state-of-the-art performance in many application domains. However, the widespread usage of deep neural networks in  safety-critical applications, e.g. in healthcare, autonomous driving/aviation, manufacturing, raises concerns as deep neural networks have problematic deficiencies. Among these deficiencies, overconfident predictions on non-task related inputs \cite{NguYosClu2015,HenGim2017} 
have recently attracted a lot of interest.
Even theoretically derived weaknesses like ReLU networks provably being overconfident far away from the training data~\cite{HeiAndBit2019} are yet to be fixed.
Meanwhile, reliable confidences of the classifier on the classification task (in-distribution) \cite{GuoEtAl2017} as well as on the out-distribution \cite{HenGim2017,HeiAndBit2019} are important to be able to detect when the deep neural network is working outside of its specification, which can then be used to either involve a human operator 
or to fall back into a ``safe state''. 
Thus, solving this problem is of high importance for trustworthy ML systems.
Crucially, a detection method needs to generalize to novel test out-distributions that are not available during training, since one does not know which unknown inputs can be expected.

Many approaches have been proposed for OOD detection,  \cite{HenGim2017,liang2017enhancing,LeeEtAl2018, lee2018simple,HenMazDie2019,ren2019likelihood,HeiAndBit2019,meinke2020towards,chen2020informative,papadopoulos2021outlier,macedo2021improving,macedo2021entropic2}. 
In this paper we focus on confidence based OOD detection, i.e. the probability of the predicted class is used to decide whether to reject or accept the sample, because of its straightforward interpretation and because it has been shown to perform no worse than other scores for OOD detection~\cite{bitterwolf2022breaking}.
One of the currently best performing methods enforces low confidence during training (``outlier exposure'' (OE)) on a large and diverse set of out-distribution images \cite{HenMazDie2019} which leads to strong separation of in- and out-distribution
based on the confidence of the classifier.

\begin{table*}[t]
{\small    \centering
    \caption{\textbf{\method{} combines desirable properties of existing (adversarially robust) OOD detection methods.} It has high test accuracy and standard OOD detection performance (as \cite{HenMazDie2019}) 
    and has worst-case guarantees if the out-distribution samples are adversarially perturbed in an $l_\infty$-neighborhood to maximize the confidence (see Section~\ref{sec:eval}). Similar to CCU \cite{meinke2020towards} it avoids the problem of asymptotic overconfidence far away from the training data.
    \label{tab:properties}}
    \setlength{\tabcolsep}{3.9pt}
    \begin{tabular}{c|c|c|c|c|c}
         & OE~\cite{HenMazDie2019} & CCU~\cite{meinke2020towards}  & ACET/ATOM~\cite{HeiAndBit2019,chen2020informative}  & GOOD~\cite{bitterwolf2020certifiably}  & \method{} \\
             \hline
    High accuracy                             & \checkmark & \checkmark & \checkmark  &            & \checkmark \\
    High clean OOD detection performance              & \checkmark & \checkmark & \checkmark  &            & \checkmark \\
    Adv. OOD $l_\infty$-robustness            &            &            & (\checkmark) & \checkmark & \checkmark \\
    Adv. OOD $l_\infty$-certificates          &            &            &             & \checkmark & \checkmark \\
Provably not asympt. overconfident            &            & \checkmark &            &            & \checkmark \\
    \end{tabular}}
\end{table*}

A remaining robustness problem of standard OOD detection methods is that they are vulnerable to adversarial perturbations, 
i.e. small
modifications of OOD inputs can lead to large confidence of the classifier on the manipulated samples ~\cite{NguYosClu2015,HeiAndBit2019,sehwag2019better}.
Of course, an OOD input, which by definition is semantically far away from the in-distribution, should not be able to be moved into a region that is considered in-distribution by the detection model if the movement is imperceptibly small.
On the other hand, a slightly perturbed in-distribution input can still be considered in-distribution for some perturbations (e.g. if the perturbation resembles standard camera noise), but for other perturbations it might be highly atypical and therefore should arguably rather be seen as OOD.
Furthermore, adversarial robustness on the in-distribution is known to come at the cost of clean accuracy~\cite{tsipras2018robustness} which hinders the adoption of such methods in practice.
We aim to provide a method that does not harm the in-distribution performance in any way and
thus, like previous OOD-detection methods, we do {\emph{not}} consider adversarially manipulated in-distribution samples and focus on ensuring that OOD samples remain OOD under adversarial attacks.

While different methods for adversarially robust OOD detection have been proposed \cite{HeiAndBit2019,sehwag2019better,meinke2020towards,chen2020informative,bitterwolf2020certifiably} there is little work on
\emph{provably} adversarially robust OOD detection \cite{meinke2020towards,bitterwolf2020certifiably,kopetzki2020evaluating,berrada2021verifying}.
For the standard empirical evaluation of adversarial robustness, for each input one runs an array of different attacks that conform to the assumed threat model and records the output for the worst found perturbation.
This means that there is no guarantee that a more malign perturbation does not exist, as only a lower bound on the
adversarial robustness is established.
Provable adversarial robustness on OOD data, which we provide in this paper, yields a mathematically deduced upper bound on the worst-case confidence around each OOD sample.
For our guaranteed upper bounds on the confidence of an OOD sample, it is certain that no applicable manipulation raises the confidence above the certified value.

In \cite{kopetzki2020evaluating} they apply randomized smoothing to obtain guarantees wrt. $l_2$-perturbations for Dirchlet-based models \cite{malinin2018predictive,malinin2019reverse,sensoy2018evidential} which already show quite some gap in terms of AUC-ROC to SOTA OOD detection methods even without attacks.
Interval bound propagation (IBP) \cite{gowal2018effectiveness,MirGehVec2018,zhang2020towards,jovanovic2021certified} has been shown to be one of the most effective techniques in certified adversarial robustness on the in-distribution when applied during training. In GOOD \cite{bitterwolf2020certifiably} they use IBP to compute upper bounds on the confidence in an $l_\infty$-neighborhood of the input and minimize these upper bounds on a training out-distribution. This yields classifiers with pointwise guarantees for adversarially robust OOD detection even for ``close'' out-distribution inputs which generalize to novel OOD test distributions. However, the employed architectures of the neural network are restricted to rather shallow networks as otherwise the bounds of IBP are loose. Thus, they obtain low classification accuracy which is far from the state-of-the-art, e.g. 91\% on CIFAR10, and their approach does not scale to more complex tasks like ImageNet. In particular, despite its low accuracy the employed network architecture is quite large and has higher memory consumption than a ResNet50. The authors of \cite{berrada2021verifying} use SOTA verification techniques \cite{dathathri2020enabling} and get guarantees for OOD detection wrt. $l_\infty$-perturbations for ACET models \cite{HeiAndBit2019} that were not specifically trained to be verifiable but the guarantees obtained by training the models via IBP in \cite{bitterwolf2020certifiably} are significantly better.

A \emph{different} type of guaranteed low confidence on OOD data pertains to the asymptotic behavior of a classifier. Since standard ReLU networks provably have increasing confidence in almost all directions far away from the training data \cite{HeiAndBit2019}, one has to modify the architecture in order to solve this problem.
In CCU \cite{meinke2020towards} the authors append density estimators based on Gaussian mixture models for in- and out-distribution to the softmax layer.
By also enforcing low confidence on a training out-distribution, they achieve similar OOD detection performance to \cite{HenMazDie2019} but can guarantee that the classifier shows decreasing confidence as one moves away from the training data. However, for close in-distribution inputs this approach yields no guarantee as the Gaussian mixture models are not powerful enough for complex image classification tasks.
In
\cite{kristiadi2020being,kristiadi2020fixing} similar asymptotic guarantees are derived for Bayesian neural networks but without any robustness guarantees. 

\begin{figure*}[t]
    \centering
    \includegraphics[width=1\textwidth]{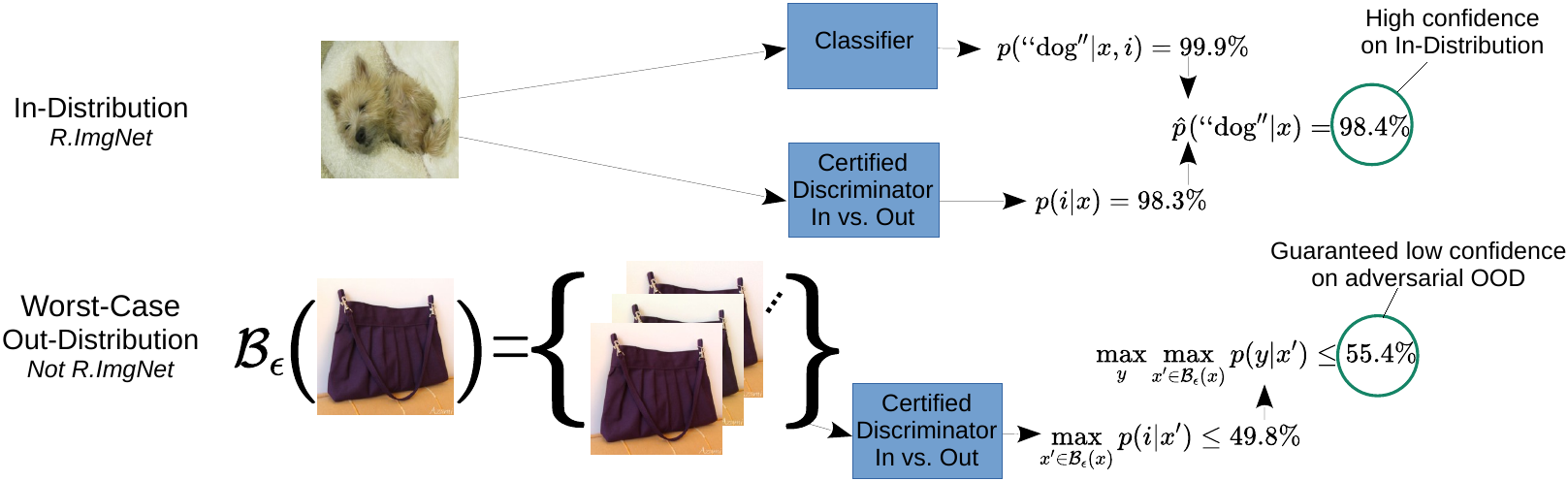}
    \caption{\textbf{ProoD's Architecture:} Combining the output of a classifier and a certified discriminator, see Eq.~\eqref{Eq:JointModel}, we achieve high confidence
    on the in-distribution sample of a dog (R.ImgNet).
    The certified discriminator, see Eq.~\eqref{eq:upper-bound}, yields
    an upper bound on the confidence 
    in a  $\ell_\infty$-neighborhood of the shown OOD sample not belonging to any classes of R.ImgNet. 
    ProoD achieves provable guarantees on adversarial OOD detection without loss in accuracy or clean OOD detection.
    }
    \label{fig:teaser}
\end{figure*}

In this paper we propose ProoD which merges a certified binary discriminator for in-versus out-distribution with a classifier for the in-distribution task in a principled fashion into a joint classifier. This combines the advantages of CCU \cite{meinke2020towards} and GOOD \cite{bitterwolf2020certifiably} without suffering from their downsides. In particular, ProoD simultaneously achieves the following:
\begin{itemize}
    \item Guaranteed adversarially robust OOD detection via certified upper bounds on the confidence in $l_\infty$-balls around OOD samples. 
    \item Additionally, it provably prevents the asymptotic overconfidence of deep neural networks. 
    \item It can be used with arbitrary architectures and has no loss in prediction performance and standard OOD detection performance.
\end{itemize}
Thus, we get provable guarantees for adversarially robust OOD detection, fix the asymptotic overconfidence (almost) for free as we have (almost) no loss in prediction and standard OOD detection performance.
We qualitatively compare the properties of our model to prior approaches in Table~\ref{tab:properties}.

\section{Provably Robust Detection of Out-of-distribution Data}

In the following we consider feedforward networks for classification,  $f:\mathbb{R}^d \rightarrow \mathbb{R}^K$,  with $K$ classes defined with input $x^{(0)} = x$ and layers $l=1,\hdots L-1$ as
\begin{align}\label{Eq:DefDNN}
    x^{(l)} &=  \sigma^{(l)} \left( W^{(l)}x^{(l-1)} + b^{(l)}  \right), \qquad
    & f(x) = W^{(L)} x^{(L-1)} + b^{(L)}, 
\end{align}
where
$W^{(l)}$ and $b^{(l)}$ are weights and biases and $\sigma^{(l)}$ is either the ReLU or leaky ReLU activation function of layer $l$ . We refer to the output of $f$ as the \textit{logits} and get a probability distribution over the classes via $\hat{p}(y|x)= \frac{e^{f_y(x)}}{ \sum_k^K e^{f_k(x)} }$ for $y=1,\ldots,K$. We define the confidence as $\mathrm{Conf}(f(x))= \max_{y=1,\ldots,K} \hat{p}(y|x)$.

\subsection{Joint Model for OOD Detection and Classification}\label{Sec:Joint}

In our joint model we assume that there exists an in- and out-distribution where the out-distribution samples are unrelated to the in-distribution task. Thus, we can formally write the conditional distribution on the input as
\begin{equation}\label{Eq:JointModel}
    \hat{p}(y|x) = \hat{p}(y|x,i)\hat{p}(i|x) + \hat{p}(y|x,o)\hat{p}(o|x),
\end{equation}
where $\hat{p}(i|x)$ is the conditional distribution that sample $x$ belongs to the in-distribution and $\hat{p}(y|x,i)$ is the conditional distribution for the in-distribution. We assume that OOD samples are unrelated and thus maximally un-informative to the in-distribution task, i.e. we fix $\hat{p}(y|x,o)=\frac{1}{K}$, so that the classifier can be written as
\begin{equation}\label{Eq:JointModelRewritten}
 \hat{p}(y|x) =  \hat{p}(y|x,i)\hat{p}(i|x) + \frac{1}{K} (1-\hat{p}(i|x)) .
\end{equation}
We train the binary classifier $\hat{p}(i|x)$ in a certified robust fashion wrt. an $l_\infty$-threat model so that even adversarially manipulated OOD samples are detected.
In order to avoid confusion with the multi-class classifier, we will refer to $\hat{p}(i|x)$ as a binary discriminator.
In an $l_\infty$-ball of radius $\epsilon$ around $x \in \mathds{R}^d$ and for all $y$ we get the upper bound on the confidence of the final classifier in Eq.~\eqref{Eq:JointModelRewritten}:
\begin{equation}\label{eq:upper-bound}
    \max_{\norm{x'-x}_\infty \leq \epsilon}  \hat{p}(y|x')
    \leq \! \max_{\norm{x'-x}_\infty \leq \epsilon} \hat{p}(i|x')  + \frac{1}{K}\big(1-\hat{p}(i|x')\big)
    = \frac{K \! -  \! 1}{K} \! \! \max_{\norm{x'-x}_\infty \leq \epsilon} \hat{p}(i|x') + \frac{1}{K},  \! \! \! 
\end{equation}
where we have used that $p(y|x,i)\leq 1 \forall x, y$, so we can defer the certification ``work'' to the binary discriminator. 
Using a particular constraint on the weights of the binary discriminator, we get similar asymptotic properties as in \cite{meinke2020towards} but additionally get certified adversarial robustness for close out-distribution samples as in \cite{bitterwolf2020certifiably}. 
In contrast to \cite{bitterwolf2020certifiably}, this comes without loss in test accuracy or non-adversarial OOD detection performance since in our model the neural network used for the in-distribution classification task $\hat{p}(y|x,i)$ is independent of the binary discriminator. 
Thus, we have the advantage that the classifier can use arbitrary deep neural networks and is not constrained to certifiable networks. 
We call our approach \textbf{Pr}ovable \textbf{o}ut-\textbf{o}f-\textbf{D}istribution detector (ProoD) and visualize its components in Figure~\ref{fig:teaser}. 
The intuitive idea of why ProoD can achieve adversarially robust OOD detection without loss in clean OOD detection can be explained with the behavior of the predicted probability distribution provided in Equation \eqref{Eq:JointModelRewritten}.
\begin{itemize}
\item \textbf{For clean OOD:} the classifier $\hat{p}(y|x,i)$
(trained similar to Outlier Exposure) already enforces low confidence on out-of-distribution points and thus irrespective of the values $\hat{p}(i|x)$, the resulting output of $\hat{p}(y|x)$ will be close to uniform as well and thus ProoD performs similar to Outlier Exposure.

\item \textbf{For adversarial OOD:} the classifier confidence $\max_y \hat{p}(y|x,i)$ is potentially corrupted but now the binary discriminator $\hat{p}(i|x)$  kicks in and ensures that the resulting prediction $\hat{p}(y|x)$ is close to uniform.
\end{itemize}
 This explains why the combination of certified discriminator and classifier works much better than the individual parts and the use of this ``redundancy'' is the key idea of ProoD.

\paragraph{Certifiably Robust Binary Discrimination of In- versus Out-Distribution}
The first goal is to get a  certifiably adversarially robust OOD detector $\hat{p}(i|x)$. 
We train this binary discriminator independently of the overall classifier as the training schedules for certified robustness are incompatible with the standard training schedules of normal classifiers.
For this binary classification problem we use a
logistic model $\hat{p}(i|x)=\frac{1}{1+e^{-g(x)}}$, where
$g:\mathbb{R}^d \rightarrow \mathbb{R}$ are logits of a neural network (we denote the weights and biases of $g$ by $W_g$ and $b_g$ in order to differentiate it from the classifier $f$ introduced in the next paragraph). 
Let $\left(x_r, y_r\right)_{r=1}^N$ be our  in-distribution training data (we use the class encoding $+1$ for the in-distribution and $-1$ for the out-distribution) and $\left( z_s \right)_{s=1}^M$ be our training out-distribution data. Then the optimization problem associated to the binary classification problem becomes:
\begin{equation}\label{Eq:BinaryLoss}
   \min_{\stackrel{g}{W_g^{(L_g)}<0}} \! \! \! \! \! \frac{1}{N} \! \sum_{r=1}^N\log \! \left(1 + e^{-g(x_r)}\right) + \frac{1}{M} \! \sum_{s=1}^M \log \! \left(1 + e^{\bar{g}(z_s)}\right) \! ,
\end{equation}
where we minimize over the parameters of the neural network $g$ under the constraint that the weights of the output layer $W_g^{(L_g)}$ are componentwise negative and $\bar{g}(z) \geq \max_{u \in B_p(z,\epsilon)} g(u)$ is an upper bound on the output of $g$ around OOD samples for a given $l_p$-threat model $B_p(z,\epsilon)=\{u \in [0,1]^d\,|\,\norm{u-z}_p\leq \epsilon\}$. In this paper we always use an $l_\infty$-threat model. This upper bound could, in principle, be computed using any certification technique but we will use interval bound propagation (IBP) since it is simple, fast and has been shown to produce SOTA results \cite{gowal2018effectiveness}. Note that this is not standard adversarial training for a binary classification problem as here we have an asymmetric situation: we want to be (certifiably) robust to adversarial manipulation on the out-distribution data but \emph{not} on the in-distribution and thus the upper bound is only used for out-distribution samples. The negativity of the output layer's weights $W_g^{(L_g)}$ is enforced by using the parameterization $(W_g^{(L_g)})_{j}=-e^{h_j}$ componentwise and optimizing over $h_j$. In Section \ref{sec:guarantees} we show how the negativity of $W_g^{(L_g)}$ allows us to control the asymptotic behavior of the joint classifier. 

For the reader's convenience we quickly present the upper $\overline{x}^{(l)}$ and lower $\underline{x}^{(l)}$ bounds on the output of layer $l$ in a feedforward neural network produced by IBP:
\begin{equation}\label{eq:IBP-iteration}
    \overline{x}^{(l)} = \sigma \left( W^{(l)}_+ \overline{x}^{(l-1)} + W^{(l)}_- \underline{x}^{(l-1)} + b^{(l)} \right), \quad \underline{x}^{(l)} = \sigma \left( W^{(l)}_+ \underline{x}^{(l-1)} + W^{(l)}_- \overline{x}^{(l-1)} + b^{(l)} \right),  
\end{equation}
where $W_+=\max (0, W)$ and $W_-=\min(0, W)$ (min/max used componentwise). For an $l_\infty$-threat model one starts with the upper and lower bounds for the input layer $\overline{x}^{(0)} = x + \epsilon$ and $\underline{x}^{(0)} = x - \epsilon$ and then iteratively computes the layerwise upper and lower bounds  $\overline{x}^{(l)}$, $\underline{x}^{(l)}$ which fulfill
\begin{equation}\label{eq:IBP-bounds}
    \underline{x}^{(l)}\; \leq \! \;\min_{ \norm{x'-x}_\infty \leq \epsilon } \! x^{(l)}(x') \leq \; \! \max_{ \norm{x'-x}_\infty \leq \epsilon } \! x^{(l)}(x') \; \leq \;\overline{x}^{(l)} .
\end{equation}
While in  \cite{bitterwolf2020certifiably} they also used IBP to upper bound the confidence of the classifier this resulted in a bound that took into account all $\mathcal{O}(K^2)$ logit differences between all classes. In contrast, our loss in Eq.~\eqref{Eq:BinaryLoss} is significantly simpler as we just have a binary classification problem and therefore only need a single bound. Thus, our approach easily scales to tasks with a large number of classes and training the binary discriminator with IBP turns out to be significantly more stable than the approach in \cite{bitterwolf2020certifiably}.

\paragraph{(Semi)-Joint Training of the Final Classifier}
Given the certifiably robust model $\hat{p}(i|x)$ for the binary classification task between in- and out-distribution, we need to determine the final predictive distribution $\hat{p}(y|x)$ in Eq.~\eqref{Eq:JointModel}. 
On top of the provable OOD performance that we get from Eq.~\eqref{eq:upper-bound}, we also want to achieve SOTA performance on unperturbed OOD data. 
In principle we could independently train a model for the predictive in-distribution task $\hat{p}(y|x,i)$, e.g. using outlier exposure (OE) \cite{HenMazDie2019} or any other state-of-the-art OOD detection method and simply combine it with our $\hat{p}(i|x)$.
While this does lead to models with high OOD performance that also have guarantees, it completely ignores the interaction between $\hat{p}(i|x)$ and $\hat{p}(y|x,i)$ during training.
Instead we propose to train $\hat{p}(y|x,i)$
by optimizing our final predictive distribution $\hat{p}(y|x)$. Note that in order to retain the guarantees of $\hat{p}(i|x)$ we only
train the parameters of the neural network $f:\mathbb{R}^d\rightarrow \mathbb{R}^K$ and need to keep $\hat{p}(i|x)$ resp. $g$ fixed. Because $g$ stays fixed we call this semi-joint training.
We use OE \cite{HenMazDie2019} for training $\hat{p}(y|x)$ with the cross-entropy loss and use the softmax-function in order to obtain the predictive distribution $\hat{p}_f(y|x,i)=\frac{e^{ f_y(x)} }{\sum_k e^{f_k(x)}}$ from $f$:
\begin{align}\label{Eq:SemiJointLoss}
 \!   &\min_f  -\frac{1}{N} \! \sum_{r=1}^N \log\big(\hat{p}(y_r|x_r)\big) - \frac{1}{M} \! \sum_{s=1}^M \frac{1}{K}\sum_{l=1}^K \log\big(\hat{p}(l|z_s)\big) \nonumber\\
\!    =&\min_f - \frac{1}{N} \! \sum_{r=1}^N \log \! \left(\! \hat{p}_f(y_r|x_r,i)\hat{p}(i|x_r) \! + \! \frac{1}{K}\big(1-\hat{p}(i|x_r)\big) \! \right)\nonumber\\
  \!  & - \frac{1}{M}\sum_{s=1}^M \! \frac{1}{K} \! \sum_{l=1}^K \log \! \left( \! \hat{p}_f(l|z_s,i)\hat{p}(i|z_s) + \frac{1}{K}\big(1-\hat{p}(i|z_s)\big) \! \right) \! ,
\end{align}
where the first term is the standard cross-entropy loss on the in-distribution but now for our joint model for $\hat{p}(y|x)$ and the second term enforces uniform confidence on out-distribution samples. In App.~\ref{App:ProoD-SEP} we show that semi-joint training leads to stronger guarantees than separate training.

The loss in Eq.~\eqref{Eq:BinaryLoss} implicitly weighs the in-distribution and worst-case out-distribution equally, which amounts to the assumption $p(i)=\frac{1}{2}=p(o)$. This highly conservative choice simplifies training the binary discriminator but may not reflect the expected frequency of OOD samples at test time and in effect means that $\hat{p}(i|x)$ tends to be quite low. 
This typically yields good guaranteed AUCs but can have a negative impact on the standard out-distribution performance. In order to better explore the trade-off of guaranteed and standard OOD detection, we repeat the above semi-joint training with different shifts of the offset parameter in the output layer
\begin{equation}\label{eq:bias_shift}
    b'=b_g^{(L_g)}+\Delta,
\end{equation}
where $\Delta\geq 0$ leads to increasing $\hat{p}(i|x)$. This shift 
has a direct interpretation in terms of the probabilities $p(i)$ and $p(o)$. 
Under the assumption that our binary discriminator $g$ is perfect, that is
\begin{equation}
p(i|x)=\frac{p(x|i)p(i)}{p(x|i)p(i)+p(x|o)p(o)}
=\frac{1}{1+e^{-g(x)}},    
\end{equation}
then it holds that $e^{g(x)}=\frac{p(x|i)p(i)}{p(x|o)p(o)}$. A change of the prior probabilities $\tilde{p}(i)$ and $\tilde{p}(o)$ without changing $p(x|i)$ and $p(x|o)$ then corresponds to a novel classifier 
\begin{equation}
e^{\tilde{g}(x)}=\frac{p(x|i)\tilde{p}(i)}{p(x|o)\tilde{p}(o)} =\frac{p(x|i)p(i)}{p(x|o)p(o)} \frac{p(o)\tilde{p}(i)}{p(i)\tilde{p}(o)}=e^{g(x)}e^{\Delta}   
\end{equation}
with $\Delta = \log\left(\frac{p(o)\tilde{p}(i)}{p(i)\tilde{p}(o)}\right)$. 
Note that $\tilde{p}(i)>p(i)$ corresponds to positive shifts. 
In practice, this parameter can be chosen based on the priors for a particular application. 
Since no such priors are available in our case we determine a suitable shift by evaluating on the training out-distribution (see Section \ref{sec:eval}). Note that we explicitly do not train the shift parameter since this way the guarantees would get lost as the classifier implicitly learns a large $\Delta$ in order to maximize the confidence on the in-distribution, thus converging to a normal outlier exposure-type classifier 
without any guarantees.

\section{Guarantees on Asymptotic Confidence}\label{sec:guarantees}
In this section we show that our specific construction provably avoids the issue of asymptotic overconfidence that was pointed out in ~\cite{HeiAndBit2019}. Note that the resulting guarantee (as stated in Theorem~\ref{Th:main-theorem}) is different from and in addition to the robustness guarantees discussed in the previous section (see Eq.~\eqref{eq:upper-bound}). The previous section dealt with providing confidence upper bounds on neighborhoods around OOD samples whereas this section deals with ensuring that a classifier's confidence decreases asymptotically as one moves away from all training data.

We note that a ReLU neural network $f:\R^d \rightarrow \R^K$ as defined in Eq.~\eqref{Eq:DefDNN} using ReLU or leaky ReLU as activation functions, potential max-or average pooling and skip connection yields a piece-wise affine function \cite{AroEtAl2018}, i.e. there exists a finite set of polytopes $Q_r \subset \R^d$ with $r=1,\hdots,R$ such that $\cup_{r=1}^{R}Q_r=\R^d$ and $f$ restricted to each of the polytopes is an affine function. 
Since there are only finitely many polytopes some of them have to extend to infinity and on these ones the neural network is essentially an affine classifier. This fact has been used in \cite{HeiAndBit2019} to show that ReLU networks are almost always asymptotically overconfident in the sense that if one moves to infinity the confidence of the classifier approaches $1$ (instead of converging to the desirable $1/K$ as in these regions the classifier has never seen any data).
The following theorem now shows that, in contrast to standard ReLU networks, our proposed joint classifier gets provably less confident in its decisions as one moves away from the training data which is a desirable property of any reasonable classifier.

\begin{restatable}{theorem}{MainTheorem}\label{Th:main-theorem}
Let $x \in \R^d$ with $x\neq 0$ and let $g:\R^d \rightarrow \R$ be the ReLU-network of the binary discriminator (with the last activation being a non-leaky ReLU). Denote by $\{Q_r\}_{r=1}^R$ the finite set of polytopes on which $g$ is affine (exists by Lemma \ref{eq:polytope-asym} in App.~\ref{App:Proof}). 
Denote by $Q_t$ the polytope such that $\beta x \in Q_t$ for all $\beta \geq \alpha$ and let $x^{(L-1)}(z)= U z + d$ with $U \in \R^{n_{L-1} \times d}$ and $d \in \R^{n_{L-1}}$ be the output of the pre-logit layer of $g$ for $z \in Q_t$. 
If $Ux \neq 0$, then 
$\lim_{\beta \rightarrow \infty} \hat{p}(y| \beta x) = \frac{1}{K}.$
\end{restatable}
\ifpaper
\begin{proof}
We note that with a similar argument as in the derivation of \eqref{eq:upper-bound} it holds
\begin{equation}\label{eq:upper-bound2}
  \hat{p}(y| \beta x) \leq \hat{p}(i| \beta x)  + \frac{1}{K}\big(1-\hat{p}(i| \beta x)\big) = \frac{K-1}{K} \hat{p}(i|\beta x) + \frac{1}{K}
\end{equation}
We note that for all $\beta \geq \alpha$ it holds $\beta x \in Q_t$ so that
\[ \hat{p}(i|\beta x) = \frac{1}{1+e^{-g(\beta x)}} = \frac{1}{1+e^{\inner{W_g^{(L_g)},U \beta x+d}+b_g^{(L_g)}}}.\]
As $x^{(L-1)}_i(x)\geq 0$ for all $x \in \R^d$ it has to hold
$(\beta U x + d)_i \geq 0$ for all $\beta\geq \alpha$ and $i=1,\ldots,n_{L-1}$. This implies that $(Ux)_i \geq 0$ for all $i=1,\ldots,n_{L-1}$  and since $Ux \neq 0$ there has to exist at least one component $i^*$ such that $(Ux)_{i^*}>0$. Moreover, $W_g^{(L_g)}$ has strictly negative components and thus for all $\beta \geq \alpha$ it holds
\[ g(\beta x)= \inner{W_g^{(L_g)},U \beta x+d}+b_g^{(L_g)}
= \beta \inner{W_g^{(L_g)},U x} + \inner{W_g^{(L_g)},d} +b_g^{(L_g)}.\]
As $\inner{W_g^{(L_g)},U x}<0$ we get $\lim_{\beta \rightarrow \infty} g(x)=-\infty$ and thus
\[ \lim_{\beta \rightarrow \infty} \hat{p}(i|\beta x)=0.\]
Plugging this into \eqref{eq:upper-bound2} yields the result.
\end{proof}
\fi
The proof is in App.~\ref{App:Proof}. In App.~\ref{App:AdvAsymptotic} we show that the condition $Ux\neq 0$ is not restrictive, as this property holds in all cases where we checked it. 
The negativity condition on the weights $W^{(L_g)}_g$ of the output layer of the in-vs. out-distribution discriminator $g$ is crucial for the proof. 
This may seem restrictive, but we did not encounter any negative influence of this constraint on test accuracy, guaranteed or standard OOD detection performance. 
Thus, the asymptotic guarantees come essentially for free. 

\section{Experiments}\label{sec:exp}

\subsection{Training of \method{}}
We provide experiments on CIFAR10, CIFAR100 \cite{krizhevsky2009learning} and Restricted Imagenet (R.ImgNet) \cite{tsipras2018robustness}. 
The latter consists of ImageNet images (ILSVRC2012)~\cite{imagenet_cvpr09,ILSVRC15} belonging to 9 types of animals. 

\paragraph{Training the Binary Discriminator}\label{Sec:TrainBinary}
We train the binary discriminator between in-and out-distribution using the loss in Eq.~\eqref{Eq:BinaryLoss} with the bounds over an $l_\infty$-ball of radius $\epsilon=0.01$ for the out-distribution following \cite{bitterwolf2020certifiably}. We use relatively shallow CNNs with only 5 layers plus pooling layers, see  App.~\ref{App:ExpDetails}.
For the training out-distribution, we could follow previous work and use 80M Tiny Images \cite{torralba200880} for CIFAR10/100. 
However, there have been concerns over the use of this dataset~\cite{Birhane_2021_WACV} due to offensive class labels.
Although we do not use any of the class labels, we choose to use OpenImages~\cite{kuznetsova2020open} as training OOD instead. 
In order to ensure a fair comparison with prior work we also present results that were obtained using 80M Tiny Images in App.~\ref{App:OpenImages}.
For R.ImgNet we use the ILSVRC2012 train images that do not belong to R.ImgNet as training out-distribution (NotR.ImgNet).

\paragraph{Semi-Joint Training}\label{Sec:SemiJointAblation}
For the classifier we use a ResNet18 architecture on CIFAR and a ResNet50 on R.ImgNet. Note that the architecture of our binary discriminator is over an order of magnitude smaller than the one in \cite{bitterwolf2020certifiably} (11MB instead of 135MB) and thus the memory overhead for the binary discriminator is less than a third of that of the classifier. All schedules, hardware and hyperparameters are described in App.~\ref{App:ExpDetails}.
As discussed in Section~\ref{Sec:Joint}, when training the binary discriminator one implicitly assumes that in- and (worst-case) out-distribution samples are equally likely. 
It seems very unlikely that one would be presented with such a large number of OOD samples in practice but as discussed in Section~\ref{Sec:Joint}, we can adjust the weight of the losses after training the discriminator (but before training the classifier) by shifting the bias $b_g^{(L_g)}$ in the output layer of the binary discriminator. We train several \method{} models for binary shifts in $\{0,1,2,3,4,5,6\}$ and then evaluate the AUC and guaranteed AUC (see \ref{sec:eval})
on a subset of the training out-distribution OpenImages (resp. NotR.ImgNet). For all bias shifts we use the same fixed provably trained binary discriminator and only train the classifier part.
As our goal is to have provable guarantees with minimal or no loss on the standard OOD detection task, among all solutions which have better AUC than outlier exposure (OE) \cite{HenMazDie2019} we choose the one with the highest guaranteed AUC 
on OpenImages (on CIFAR10/CIFAR100) resp. NotR.ImgNet (on R.ImgNet). If none of the solutions has better AUC than OE on the training out-distribution we take the one with the highest AUC. 
We show the trade-off curves for the example in App.~\ref{App:ExpDetails}.

\subsection{Evaluation}\label{sec:eval}
\paragraph{Setup} 

For OOD evaluation for CIFAR10/100 we use the test sets from CIFAR100/10, SVHN~\cite{SVHN}, the classroom category of downscaled LSUN \cite{lsun} (LSUN\_CR) as well as smooth noise as suggested in \cite{HeiAndBit2019} and described in App.~\ref{App:ExpDetails}.
For R.ImgNet we use Flowers \cite{nilsback2008automated}, FGVC Aircraft \cite{maji2013fine}, Stanford Cars \cite{krause20133d} and smooth noise as test out-distributions.
Since the computation of adversarial AUCs (next paragraph) requires computationally expensive adversarial attacks, we restrict the evaluation on the out-distribution to a fixed subset of 1000 images (300 in the case of LSUN\_CR) for the CIFAR experiments and 400 for the R.ImgNet models.
We still use the entire test set for the in-distribution. We also show the results on additional test out-distributions in App.~\ref{App:AddDatasets}.

\definecolor{Gray}{gray}{0.55} 
\begin{table*}[t!]
    \centering
    \caption{\textbf{OOD performance:} For all models we report accuracy on the test set of the in-distribution and AUCs, guaranteed AUCs (GAUC), adversarial AUCs (AAUC) for different test out-distributions. The radius of the $l_\infty$-ball for the adversarial manipulations of the OOD data is $\epsilon=0.01$ for all datasets. The bias shift $\Delta$ that was used for \method{} is shown for each in-distribution. The AAUCs and GAUCs for \method{} tend to be very close, indicating remarkably tight certification bounds. 
    Models with accuracy drop of $>3\%$ relative to the model with highest accuracy are grayed out. Of the remaining models, we highlight the best OOD detection performance.}
    \label{Tab:MainTable}
    \setlength{\tabcolsep}{1.5pt}
    \begin{tabu}{l|c|ccc|ccc|ccc|ccc}
    \toprule
In: CIFAR10 & & \multicolumn{3}{c}{CIFAR100} & \multicolumn{3}{c}{SVHN} & \multicolumn{3}{c}{LSUN\_CR} & \multicolumn{3}{c}{Smooth} \\
{} &    \small{Acc} &   \small{AUC} &  \small{GAUC} &  \small{AAUC} &   \small{AUC} &  \small{GAUC} &  \small{AAUC} &     \small{AUC} &  \small{GAUC} &  \small{AAUC} &    \small{AUC}&  \small{GAUC} &  \small{AAUC} \\
\midrule
Plain      &  \bf{95.01} &     90.0 &   0.0 &   0.7 &  93.8 &   0.0 &   0.3 &    93.1 &   0.0 &   0.5 &   98.0 &   0.0 &   0.7 \\
OE         &  94.91 &     \bf{91.1} &   0.0 &   0.9 &  97.3 &   0.0 &   0.0 &   \bf{100.0} &   0.0 &   2.7 &   \bf{99.9} &   0.0 &   1.5 \\
ATOM &  93.63 &     78.3 &  0.0 &  21.7 &  94.4 &  0.0 &  24.1 &    79.8 &  0.0 &  20.1 &   99.5 &  0.0 &  \bf{73.2} \\ 
ACET       &  93.43 &     86.0 &   0.0 &   4.0 &  99.3 &   0.0 &   4.6 &    89.2 &   0.0 &   3.7 &   99.9 &   0.0 &  40.2 \\
\rowfont{\color{Gray}}
GOOD$_{80}$*    &  87.39 &     76.7 &  47.1 &  57.1 &  90.8 &  43.4 &  76.8 &    97.4 &  70.6 &  93.6 &   96.2 &  72.9 &  89.9  \\ 
\rowfont{\color{Gray}}
GOOD$_{100}$*      &  86.96 &     67.8 &  48.1 &  49.7 &  62.6 &  34.9 &  36.3 &    84.9 &  74.6 &  75.6 &   87.0 &  76.1 &  78.1 \\ 
\rowfont{\color{Gray}}
ProoD-Disc &    -   &     62.9 &  57.1 &  57.8 &  72.6 &  65.6 &  66.4 &    78.1 &  71.5 &  72.3 &   59.2 &  49.7 &  50.4 \\
ProoD $\Delta\!=\! 3$    &  94.99 &     89.8 &  \bf{46.1} &  \bf{46.8} &  \bf{98.3} &  \bf{53.3} &  \bf{54.1} &   \bf{100.0} &  \bf{58.3} &  \bf{59.7} &   \bf{99.9} &  \bf{38.2} &  38.8 \\
\midrule
In: CIFAR100 & & \multicolumn{3}{c}{CIFAR10} & \multicolumn{3}{c}{SVHN} & \multicolumn{3}{c}{LSUN\_CR} & \multicolumn{3}{c}{Smooth} \\
{} &    \small{Acc} &   \small{AUC} &  \small{GAUC} &  \small{AAUC} &   \small{AUC} &  \small{GAUC} &  \small{AAUC} &     \small{AUC} &  \small{GAUC} &  \small{AAUC} &    \small{AUC}&  \small{GAUC} &  \small{AAUC} \\
\midrule
Plain      &  \bf{77.38} &    \bf{77.7} &   0.0 &   0.4 &  81.9 &   0.0 &   0.2 &    76.4 &   0.0 &   0.3 &   86.6 &   0.0 &   0.4 \\
OE         &  77.25 &    77.4 &   0.0 &   0.2 &  \bf{92.3} &   0.0 &   0.0 &   \bf{100.0} &   0.0 &   0.7 &   \bf{99.5} &   0.0 &   0.5 \\
\rowfont{\color{Gray}}
ATOM       &  68.32 &    78.3 &   0.0 &  50.3 &  91.1 &   0.0 &  67.0 &    95.9 &   0.0 &  75.6 &   98.2 &   0.0 &  80.7 \\
\rowfont{\color{Gray}}
ACET       &  73.02 &    73.0 &   0.0 &   1.4 &  97.8 &   0.0 &   0.7 &    75.8 &   0.0 &   2.6 &   99.9 &   0.0 &  12.8 \\
\rowfont{\color{Gray}}
ProoD-Disc &    -   &    56.1 &  52.1 &  52.3 &  61.0 &  58.2 &  58.4 &    70.4 &  66.9 &  67.1 &   29.6 &  26.4 &  26.5 \\
ProoD $\Delta\!=\! 5$   &  77.16 &    76.6 &  \bf{17.3} &  \bf{17.4} &  91.5 &  \bf{19.7} &  \bf{19.8} &   \bf{100.0} &  \bf{22.5} &  \bf{23.1} &   98.9 &   \bf{9.0} &   \bf{9.0} \\
\midrule
In: R.ImgNet & & \multicolumn{3}{c}{Flowers} & \multicolumn{3}{c}{FGVC} & \multicolumn{3}{c}{Cars} & \multicolumn{3}{c}{Smooth} \\
{} &    \small{Acc} &   \small{AUC} &  \small{GAUC} &  \small{AAUC} &   \small{AUC} &  \small{GAUC} &  \small{AAUC} &     \small{AUC} &  \small{GAUC} &  \small{AAUC} &    \small{AUC}&  \small{GAUC} &  \small{AAUC} \\
\midrule
Plain      &  96.34 &    92.3 &   0.0 &   0.5 &  92.6 &   0.0 &   0.0 &  92.7 &   0.0 &   0.1 &   \bf{98.9} &   0.0 &   8.6 \\
OE         &  97.10 &    \bf{96.9} &   0.0 &   0.2 &  99.7 &   0.0 &   0.4 &  \bf{99.9} &   0.0 &   1.8 &   98.0 &   0.0 &   1.9 \\
\rowfont{\color{Gray}}
\method{}-Disc &     -  &    81.5 &  76.8 &  77.3 &  92.8 &  89.3 &  89.6 &  90.7 &  86.9 &  87.3 &   81.0 &  74.0 &  74.8 \\
\method{}  $\Delta\!=\! 4$ &  \bf{97.25} &    \bf{96.9} &  \bf{57.5} &  \bf{58.0} &  \bf{99.8} &  \bf{67.4} &  \bf{67.9} &  \bf{99.9} &  \bf{65.7} &  \bf{66.2} &   98.6 &  \bf{52.7} &  \bf{53.5} \\
\bottomrule
    \end{tabu}
     \begin{flushleft}\small{*Uses different architecture of classifier, see ``Baselines'' in Section~\ref{sec:eval}.}\end{flushleft}
\end{table*}
\paragraph{Guaranteed and Adversarial AUC}
We use the confidence of the classifier as the feature to discriminate between in- and out-distribution samples.
While in standard OOD detection one uses the area under the receiver-operator characteristic (AUC) to measure discrimination of in- from out-distribution, several prior works also study the worst-case AUC (WCAUC) \cite{meinke2020towards,augustin2020adversarial, bitterwolf2020certifiably, chen2020informative,berrada2021verifying}, which is defined as the minimal AUC one can achieve if each out-distribution sample is allowed to be perturbed to reach maximal confidence within a certain threat model, which in our case is an $l_\infty$-ball of radius $\epsilon$. 
Note that an alternative formulation of a worst-case AUC as the worst-case across all samples from the out-distribution would turn out to be uninteresting, since it would necessarily be close to zero even if only a single sample gets assigned high-confidence, so we do not consider this notion here.
Formally, the AUC and WCAUC of a feature $h:\mathbb{R}^d\rightarrow\mathbb{R}$ are defined as:
\begin{equation}
    \mathrm{AUC}_h(p_1, p_2) =\hspace{-0.5mm} \underset{\begin{subarray}{c}
 x \sim p_1 \\
 z \sim p_2
  \end{subarray}}{ \mathbb{E}}\hspace{-1mm}\left[   \mathds{1}_{h(x) > h(z)}\right],
  \quad \mathrm{WCAUC}_h(p_1, p_2) = \hspace{-0.5mm} \underset{\begin{subarray}{c}
 x \sim p_1 \\
 z \sim p_2
  \end{subarray}}{ \mathbb{E} }\hspace{-1mm}\left[   \mathds{1}_{ h(x) > \! \! \max\limits_{\norm{z'-z}_\infty \leq \epsilon} \! \! h(z')}\right],
\end{equation}
where $p_1, p_2$ are in-resp. out-distribution and the indicator function $\mathds{1}$ returns $1$ if the expression in its argument is true and $0$ otherwise. 

For all but one of our baselines, the OOD detecting feature $h$ is the confidence of the classifier. Since the exact evaluation of the WCAUC is computationally infeasible, we compute an upper bound and a lower bound on the WCAUC by finding $\underline{h}(z)\leq \max_{\norm{z'-z}_\infty \leq \epsilon} h(z') \leq \bar{h}(z)$. 
We find the upper bound on the WCAUC - the adversarial AUC (AAUC) - by maximizing the confidence using an adversarial attack inside the $l_\infty$-ball (i.e. finding an $\underline{h}$). 
We compute a lower bound on the WCAUC - the guaranteed AUC (GAUC) - by computing upper bounds on the confidence inside the $l_\infty$-ball (i.e. $\bar{h}$) via IBP. For non-provable methods, no non-trivial upper bound $\bar{h}<\infty$ is available so their GAUCs are always $0$. 
Note that our threat model is different from adversarial robustness on the in-distribution which neither our method nor the baselines pursue.
Since practical OOD detection scenarios require the selection of a threshold, we also evaluate the false positive rate (FPR) at 95\% true positive rate and show the results in App.~\ref{App:FPR}.

Vanishing gradients \cite{papernot2017practical,AthCarWag2018} are a significant challenge for the evaluation of AAUCs \cite{bitterwolf2020certifiably} even more than in the evaluation of adversarial robustness on the in-distribution as the models are trained to be ``flat'' on the out-distribution. 
Thus we use an ensemble of variants of projected gradient descent (PGD) \cite{MadEtAl2018} as well as the black-box SquareAttack~\cite{andriushchenko2019square} with 5000 queries. 
We use APGD~\cite{CroHei2020} (except on RImgNet, due to a memory leak) with 500 iterations and 5 random restarts.
We also use a 200-step PGD attack with momentum of 0.9 and backtracking that starts with a step size of 0.1 which is halved every time a gradient step does not increase the confidence and gets multiplied by 1.1 otherwise. 
As stated above, the models are trained to be flat and thus the gradients can be exactly zero which renders gradient-based optimization impossible. 
Therefore it is important to use a variety of different starting points.
For PGD we start from: i) a decontrasted version of the image, i.e. the point that minimizes the $l_\infty$-distance to the grey image $0.5 \cdot \Vec{1}$ within the threat model, ii) 3 uniformly drawn samples from the threat model, and iii) 3 versions of the original image perturbed by Gaussian noise with $\sigma=10^{-4}$ and then clipped to the threat model. We always clip to $[0,1]^d$ at each step of the attack. For all attacks and models we directly optimize the final score used for OOD detection. 

\paragraph{Baselines}
We compare to a normally trained baseline (Plain) and outlier exposure (OE), both trained using the same architecture and hyperparameters as the classifier in  \method{}. 
For both ATOM and ACET we found the models' OOD detection to be much less adversarially robust than claimed in~\cite{chen2020informative} (see App.~\ref{App:OpenImages}) so we retrained their models using the our architecture, threat model and training out-distribution with their original code (for CIFAR10/100). 
Running these adversarial training procedures on ImageNet resolution is infeasibly expensive.
For GOOD we also retrain using OpenImages as training OOD dataset with the code from~\cite{bitterwolf2020certifiably} (comparisons with their pre-trained models can be found in App.~\ref{App:OpenImages}).
Since they are only available on CIFAR10, we tried to train models on CIFAR100 using their code and same hyperparameters and schedules as they used for CIFAR10. This only lead to models with accuracy below $25\%$, so we do not include these models in our evaluation.
Since CCU was already shown to not provide benefits over OE on OOD data that is not very far from the in-distribution (e.g. uniform noise) \cite{meinke2020towards,bitterwolf2020certifiably} we do not include it as baseline.
We also evaluate the OOD-performance of the provable binary discriminator (\method{}-Disc) that we trained for \method{}. 
Note that this is not a classifier and is included only for reference.
All results are in Table~\ref{Tab:MainTable}.

\paragraph{Results}
\method{} achieves non-trivial GAUCs on all datasets. As was also observed in \cite{bitterwolf2020certifiably}, this shows that the IBP guarantees not only generalize to unseen samples but even to unseen distributions. 
In App.~\ref{App:LargerEpsilon} we show that they even generalize to the larger threat model $\epsilon=8/255$.
In general, the gap between our GAUCs and AAUCs is extremely small. This shows that the seemingly simple IBP bounds can be remarkably tight, as has been observed in other works \cite{gowal2018effectiveness,jovanovic2021certified}. It also shows that there would be very little benefit in applying stronger verification techniques like \cite{cheng2017maximum,KatzEtAl2017,dathathri2020enabling} in \method{}. 
Similarly, it demonstrates the strengths of our attack as there provably does not exist an attack that could lower the AAUCs on our ProoD model by more than 1.4\% on any of the out-distributions.
The bounds are also much tighter than for GOOD, which is likely due to the fact that for GOOD the confidence  is much harder to optimize during an attack because it involves maximizing the confidence in an essentially random class. 

For CIFAR10, on 3 out of 4 out-distributions ProoD's GAUCs are higher than ATOM's and ACET's AAUCs, i.e. our model's \emph{provable} adversarial robustness exceeds the SOTA methods' \emph{empirical} adversarial robustness in these cases. 
Note that this is \emph{not} due to our retraining, because the authors' pre-trained models perform even more poorly (as shown in App.~\ref{App:OpenImages}). 
On CIFAR100, ProoD's guarantees are weaker and ATOM produces strong AAUCs.
However, we observe that training both ACET and ATOM can produce inconsistent results, i.e. sometimes almost no robustness is achieved.
For the successfully trained robust ATOM model on CIFAR100 we observe drastically reduced accuracy. 
Due to the difficulty in attacking these models, it is not unlikely that a more sophisticated attack could produce even lower AAUCs.
Combined with the fact that both ACET and ATOM rely on expensive adversarial training procedures we argue that using ProoD is preferable in practice.

On CIFAR10, we see that \method{}'s GAUCs are comparable to, if slightly worse than the ones of both GOOD$_{80}$ and GOOD$_{100}$. 
Note that although the presented GOOD models are retrained, the same observations hold true when comparing to the pre-trained models (see App.~\ref{App:OpenImages}).
However, we want to point out that \method{} achieves this while retaining both high accuracy and OOD performance, both of which are lacking for GOOD. 
It is also noteworthy that the GOOD models' memory footprints are over twice as large as \method{}'s.
Generally, for ProoD the accuracy is comparable to OE and the OOD performance is similar or marginally worse. Thus ProoD shows that it is possible to achieve certifiable adversarial robustness on the out-distribution while keeping very good prediction and OOD detection performance.
Note that all methods struggle on separating CIFAR10 and CIFAR100 when using OpenImages as training OOD (as compared to 80M Tiny Images in App.~\ref{App:OpenImages}).

To the best of our knowledge with R.ImgNet we provide the first worst case OOD guarantees on high-resolution images. The GAUCs are higher than on CIFAR, indicating that meaningful certificates on higher resolution are more achievable on this task than one might expect. FGVC and Cars may seem simple to separate from the animals in R.ImgNet but this cannot be said for Flowers which are difficult to provably distinguish from images of insects on flowers.

In summary, ProoD achieves our goal of maintaining high accuracy and clean OOD detection performance while providing provably adversarially robust OOD detection.
In fact, out of all the methods that do not significantly impair the in-distribution accuracy, ProoD is the only method providing such guarantees as well while simultaneously having the highest empirical robustness.
Also note that for applications where adversarial robustness on the in-distribution is desired despite the induced reduction in accuracy, one can combine our ProoD model with a robustly trained classifier.
In App.~\ref{App:RobustModel}, we demonstrate that ProoD in fact improves the clean as well as the robust OOD detection performance in this setting.

\paragraph{Limitations \& Impact} Like our baselines, \method{} depends on a suitable training out-distribution being available. 
Also, as noted throughout the paper, our method only focuses on adversarial robustness around OOD samples and does not by itself achieve adversarial robustness around the in-distribution. 
Furthermore, our considered threat model is $l_\infty$ and while the general architecture could also be applied to other threat models, this would require moving beyond IBP.
We also show in App.~\ref{App:FPR} that non-trivial bounds on the adversarial FPR are beyond all current methods, including ProoD.

Since our work aims to make ML models safer and more reliable, we do not anticipate any misuse of the technology. Additionally, since our method does not rely on any adversarial training, the environmental impact of training is lower than for most of our baselines.
However, there have been concerns about any use of the 80M Tiny Images dataset, which is why we include these results only in App.~\ref{App:OpenImages} for back-compatibility.

\section{Conclusion}
We have demonstrated how to combine a provably adversarially robust binary discriminator between in- and out-distribution with a standard classifier in order to simultaneously achieve high accuracy, high clean OOD detection performance as well as certified adversarially robust OOD detection. 
Thus, we have combined the best properties of previous work with only a small increase in total model size and only a single hyperparameter. 
This suggests that certifiable adversarial robustness on the out-distribution (as opposed to the in-distribution) is indeed possible without losing accuracy. We further showed how in our model simply enforcing negativity in the final weights of the discriminator fixes the problem of asymptotic overconfidence in ReLU classifiers. Training ProoD models is simple and stable and thus ProoD provides OOD guarantees that come (almost) for free.

\section*{Acknowledgements}
The authors acknowledge support from the German Federal Ministry of Education and Research (BMBF) through the Tübingen AI Center (FKZ: 01IS18039A) and from the Deutsche Forschungsgemeinschaft (DFG, German Research Foundation) under Germany’s Excellence Strategy (EXCnumber 2064/1, Project number 390727645), as well as from the DFG TRR 248 (Project number389792660). The authors thank the International Max Planck Research School for Intelligent Systems (IMPRS-IS) for supporting Alexander Meinke. We also thank Maximilian Augustin for helpful advice.

\medskip

\bibliographystyle{abbrv}
\bibliography{main.bib}

\clearpage

\section*{Checklist}


\begin{enumerate}

\item For all authors...
\begin{enumerate}
  \item Do the main claims made in the abstract and introduction accurately reflect the paper's contributions and scope?
    \answerYes{}
  \item Did you describe the limitations of your work?
    \answerYes{}
  \item Did you discuss any potential negative societal impacts of your work?
    \answerYes{}
  \item Have you read the ethics review guidelines and ensured that your paper conforms to them?
    \answerYes{}
\end{enumerate}

\item If you are including theoretical results...
\begin{enumerate}
  \item Did you state the full set of assumptions of all theoretical results?
    \answerYes{}
        \item Did you include complete proofs of all theoretical results?
    \answerYes{}
\end{enumerate}

\item If you ran experiments...
\begin{enumerate}
  \item Did you include the code, data, and instructions needed to reproduce the main experimental results (either in the supplemental material or as a URL)?
    \answerYes{}
  \item Did you specify all the training details (e.g., data splits, hyperparameters, how they were chosen)?
    \answerYes{}
        \item Did you report error bars (e.g., with respect to the random seed after running experiments multiple times)?
    \answerYes{In order to be mindful of our resource consumption, we limited the measurement of error bars to a single in-distribution, see App.~\ref{App:ErrorBars}.}
        \item Did you include the total amount of compute and the type of resources used (e.g., type of GPUs, internal cluster, or cloud provider)?
    \answerYes{}
\end{enumerate}

\item If you are using existing assets (e.g., code, data, models) or curating/releasing new assets...
\begin{enumerate}
  \item If your work uses existing assets, did you cite the creators?
    \answerYes{}
  \item Did you mention the license of the assets?
    \answerYes{See App.~\ref{App:ExpDetails}.}
  \item Did you include any new assets either in the supplemental material or as a URL?
    \answerNA{}
  \item Did you discuss whether and how consent was obtained from people whose data you're using/curating?
    \answerNA{}
  \item Did you discuss whether the data you are using/curating contains personally identifiable information or offensive content?
    \answerYes{}
\end{enumerate}

\item If you used crowdsourcing or conducted research with human subjects...
\begin{enumerate}
  \item Did you include the full text of instructions given to participants and screenshots, if applicable?
    \answerNA{}
  \item Did you describe any potential participant risks, with links to Institutional Review Board (IRB) approvals, if applicable?
    \answerNA{}
  \item Did you include the estimated hourly wage paid to participants and the total amount spent on participant compensation?
    \answerNA{}
\end{enumerate}

\end{enumerate}


\onecolumn
\appendix
\section{Adversarial Asymptotic Overconfidence}\label{App:AdvAsymptotic}

\begin{figure}[t]
    \centering
    \includegraphics[width=.4\textwidth]{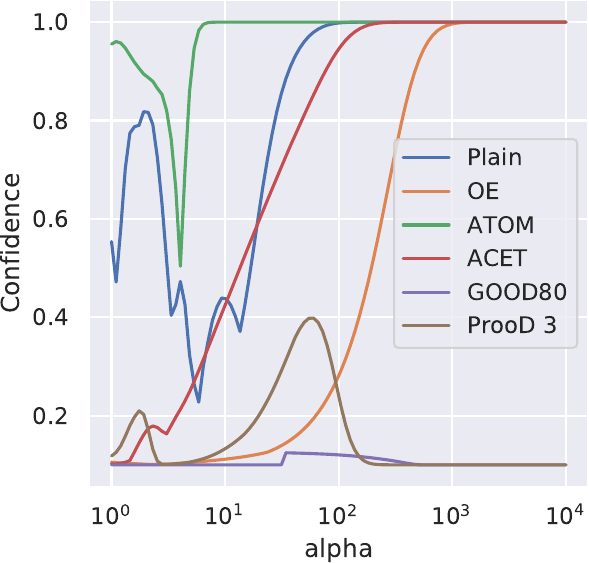} \qquad
    \includegraphics[width=.4\textwidth]{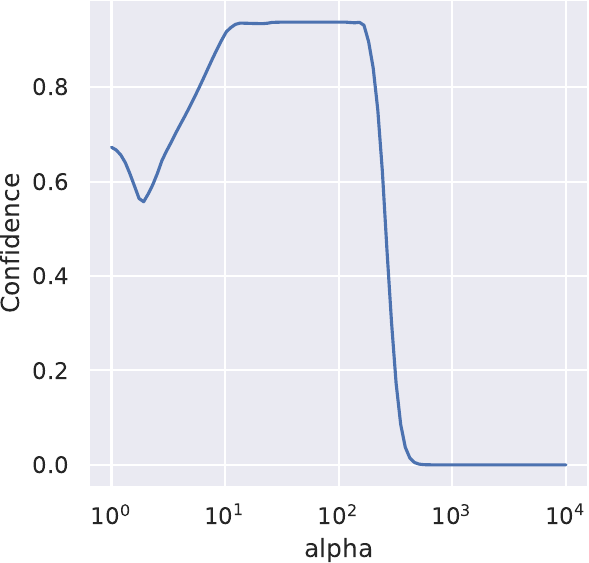}
    \caption{\textbf{Left, Asymptotic confidence:} We plot the mean confidence in the predicted in-distribution class for different models as one moves away from CIFAR100 samples along the trajectories $x+\alpha n$, where $n \in [-0.5,0.5]^d$ and $\alpha\geq 0$.
    Only GOOD and ProoD converge to uniform confidence. \textbf{Right, Adversarial asymptotic confidence:} We try to find adversarial directions in which ProoD remains at a constant high confidence, as opposed to converging to low confidence. We plot the \emph{maximum} of $\hat{p}(i|x)$ across $100$ adversarially chosen directions as one moves further in these directions by factors of $\alpha$. Note that $\hat{p}(i|x) \rightarrow 0$ implies $ \hat{p}(y|x) \rightarrow \frac{1}{K}$.
    }
    \label{fig:asymptotic}
\end{figure}

According to the authors of \cite{HeiAndBit2019}, under mild conditions, we should expect to find asymptotic overconfidence in all ReLU networks and almost all directions.
In order to empirically evaluate this, we take different models that were trained on CIFAR10 and evaluate their confidence on different CIFAR100 samples. For each sample $x$ we track the confidence, $\max_k \hat{p}(k|x)$,  along a trajectory in a uniform noise direction $x+\alpha n$, where $n \in [-0.5,0.5]^d$ and $\alpha\geq 0$.
The mean confidence across 100 such trajectories is shown on the left side of Figure~\ref{fig:asymptotic}. Even the models that produce low confidences on the original OOD sample asympotically converge to maximal confidence far away. 
The only exceptions here are GOOD and ProoD and only ProoD can guarantee that the confidence cannot converge to $1$.

However, even though the architecture provably prevents arbitrarily overconfident predictions and Theorem~\ref{Th:main-theorem} ensures that most directions will indeed converge to uniform, it is, in principle, possible to find directions where the confidence $\hat{p}(i|x)$ remains constant if the condition $Ux\neq0$ in Theorem~\ref{Th:main-theorem} is not satisfied. 
We attempted to find such directions by running the following type of attack. 
We start from a random point $x\in[-0.5,0.5]^d$ that we project onto a sphere of radius $100$. We now run gradient descent (for $20000$ steps), maximizing $g(x)$ while projecting onto the sphere at each step (unnormalized gradients with step size $0.1$ for the first $10000$ steps and $0.01$ for the last $10000$ steps). 
We then increase the radius to $1000$ and run an additional $20000$ steps with step size $0.1$. 
We rescale the resulting direction vector down to an $l_\infty$-ball of norm $1$ and compute the confidence $\hat{p}(i|x)$ as a function of the scaling in the adversarial directions. 
We show the resulting scale-wise \emph{maximum} over $100$ adversarial directions in Figure~\ref{fig:asymptotic}. 
Note that even the worst-case over $100$ adversarially found directions decays to $0$ asymptotically, thus empirically confirming the practical utility of Theorem~\ref{Th:main-theorem}.
Note that the value of $\hat{p}(i|x)$ converging to $0$ implies that the confidence of the ProoD model $\hat{p}(y|x)$ converges to $10\%$.
 
In Figure~\ref{fig:asymptotic} GOOD also stands out as having low confidence in all directions that we studied. 
This is because in all the asymptotic regions that we looked at, the pre-activations of the penultimate layer are all negative. 
If one moves outward and these pre-activations only get more negative in all directions far away from the data, the confidence does, in fact, remain low. 
Unfortunately, it also leads to gradients that are precisely zero, which is why the same attack can not be applied here. 
However, there is no guarantee that GOOD does not also get in some direction asymptotically overconfident.

\section{Separate Training for \method{}}\label{App:ProoD-SEP}
In Section~\ref{Sec:Joint} we describe semi-joint training of $\hat{p}(y|x)$. 
However, as pointed out in that section, it is possible to separately train a certifiable binary discriminator $\hat{p}(i|x)$ and an OOD aware classifier $\hat{p}(y|x,i)$ and to then simply combine them via Eq.~\eqref{Eq:JointModel}. 
We call this method separate training \method{}-S and evaluate it by using an OE trained model for $\hat{p}(y|x,i)$. 
The two versions of the methods ProoD and ProoD-S are also described in Algorithm~\ref{alg:ProoD} and Algorithm~\ref{alg:ProoD-S}. 
We show the results in Table~\ref{Tab:ProoD_SEP}, where we repeat the results for OE and \method{} for the reader's convenience. Note that OE and \method{}-S must always have the same accuracy on the in-distribution since they use the same model for classification (note that \eqref{Eq:JointModel} preserves the ranking of $\hat{p}(y|x,i)$).

We see that the AUCs of \method{}-S are almost identical to those of OE. Without almost any loss in performance \method{}-S manages to provide non-trivial GAUCs. However, as one would expect, the semi-jointly trained \method{} provides stronger guarantees at similar clean performance. Nonetheless, this post-hoc method of adding some amount of certifiability to an existing system may be interesting in applications where retraining a deployed model from scratch is infeasible.

\begin{table}[t]
    \centering
    \caption{\label{tab:Architecture}\textbf{Architecture:} The architectures that are used for the binary discriminators. Each convolutional layer is directly followed by a ReLU. }
    \setlength{\tabcolsep}{18.0pt}
    \begin{tabular}{l|l}
    \toprule
    CIFAR & R.ImgNet \\
    \midrule
        Conv2d(3, 128)      & Conv2d(3, 128)    \\
        Conv2d(128, 256)\textsubscript{s=2}    & AvgPool(2) \\
        Conv2d(256, 256)  & Conv2d(128, 256)\textsubscript{s=2} \\
        AvgPool(2)     & AvgPool(2) \\
        FC(16384, 128)     & Conv2d(256, 256) \\
        FC(128, 1)     & AvgPool(2) \\
               & FC(50176, 128) \\
        & FC(128, 1) \\
    \bottomrule
    \end{tabular}
\end{table}
\begin{table}[th]
    \centering
    \caption{\textbf{Separate training:} Addendum to Table~\ref{Tab:MainTable} showing the AUCs, GAUCs and AAUCs of \method{}-S on all datasets. The accuracy must always be identical to that of OE and the clean AUCs are also very similar to those of OE. The guarantees are almost always strictly weaker than those provided by the semi-jointly trained \method{}.}
    \label{Tab:ProoD_SEP}
    
    \setlength{\tabcolsep}{1.5pt}
    \begin{tabular}{l|c|ccc|ccc|ccc|ccc}
    \toprule
In: CIFAR10 & & \multicolumn{3}{c}{CIFAR100} & \multicolumn{3}{c}{SVHN} & \multicolumn{3}{c}{LSUN\_CR} & \multicolumn{3}{c}{Smooth} \\
{} &    \small{Acc} &   \small{AUC} &  \small{GAUC} &  \small{AAUC} &   \small{AUC} &  \small{GAUC} &  \small{AAUC} &     \small{AUC} &  \small{GAUC} &  \small{AAUC} &    \small{AUC}&  \small{GAUC} &  \small{AAUC} \\
\midrule
    OE         &  94.91 &    \bf{91.1} &   0.0 &   0.9 &  97.3 &   0.0 &   0.0 &   \bf{100.0} &   0.0 &   2.7 &   \bf{99.9} &   0.0 &   1.5 \\
    ProoD-S $\Delta\!=\! 3$ &  94.91 &     89.3 &  44.7 &  45.3 &  97.3 &  51.8 &  52.6 &   \bf{100.0} &  56.7 &  57.7 &   \bf{99.9} &  36.7 &  37.6 \\
    ProoD $\Delta\!=\! 3$    &  \bf{94.99} &     89.8 &  \bf{46.1} &  \bf{46.8} &  \bf{98.3} &  \bf{53.3} &  \bf{54.1} &   \bf{100.0} &  \bf{58.3} &  \bf{59.7} &  \bf{ 99.9} &  \bf{38.2} &  \bf{38.8} \\
\midrule
In: CIFAR100 & & \multicolumn{3}{c}{CIFAR10} & \multicolumn{3}{c}{SVHN} & \multicolumn{3}{c}{LSUN\_CR} & \multicolumn{3}{c}{Smooth} \\
{} &    \small{Acc} &   \small{AUC} &  \small{GAUC} &  \small{AAUC} &   \small{AUC} &  \small{GAUC} &  \small{AAUC} &     \small{AUC} &  \small{GAUC} &  \small{AAUC} &    \small{AUC}&  \small{GAUC} &  \small{AAUC} \\
\midrule
OE         &  \bf{77.25} &    \bf{77.4} &   0.0 &   0.2 &  \bf{92.3} &   0.0 &   0.0 &   \bf{100.0} &   0.0 &   0.7 &   \bf{99.5} &   0.0 &   0.5 \\
ProoD-S  $\Delta\!=\! 5$  &  \bf{77.25} &    \bf{77.4} &  17.2 &  17.3 &  \bf{92.3} &  19.5 &  19.6 &   \bf{100.0} &  22.4 &  22.6 &   \bf{99.5} &   \bf{9.0} &   \bf{9.1} \\
ProoD $\Delta\!=\! 5$   &  77.16 &    76.6 &  \bf{17.3} &  \bf{17.4} &  91.5 &  \bf{19.7} &  \bf{19.8} &   \bf{100.0} & \bf{ 22.5} &  \bf{23.1} &   98.9 &   \bf{9.0} &   9.0 \\
\midrule
In: R.ImgNet & & \multicolumn{3}{c}{Flowers} & \multicolumn{3}{c}{FGVC} & \multicolumn{3}{c}{Cars} & \multicolumn{3}{c}{Smooth} \\
{} &    \small{Acc} &   \small{AUC} &  \small{GAUC} &  \small{AAUC} &   \small{AUC} &  \small{GAUC} &  \small{AAUC} &     \small{AUC} &  \small{GAUC} &  \small{AAUC} &    \small{AUC}&  \small{GAUC} &  \small{AAUC} \\
\midrule
OE          &  97.10 &    \bf{96.9} &   0.0 &   0.2 &  99.7 &   0.0 &   0.4 &  \bf{99.9} &   0.0 &   1.8 &   98.0 &   0.0 &   1.9 \\
ProoD-S $\Delta\!=\! 4$ &  97.10 &    \bf{96.9} &  50.1 &  50.7 &  99.7 &  59.7 &  60.6 &  \bf{99.9} &  57.9 &  58.9 &   98.0 &  40.8 &  42.3 \\
ProoD $\Delta\!=\! 4$     &  \bf{97.25} &    \bf{96.9} &  \bf{57.5} &  \bf{58.0} &  \bf{99.8} &  \bf{67.4} &  \bf{67.9} &  \bf{99.9} &  \bf{65.7} &  \bf{66.2} &  \bf{98.6} &  \bf{52.7} &  \bf{53.5} \\
\bottomrule
    \end{tabular}
\end{table}

\section{Proof of Theorem~\ref{Th:main-theorem}}\label{App:Proof}
The following result of \cite{HeiAndBit2019} basically says that as one moves to infinity by upscaling a vector one eventually ends up in a polytope which extends to infinity. We use this in the proof of our Theorem.
\begin{lemma}[\cite{HeiAndBit2019}]\label{eq:polytope-asym}
Let $\{Q_r\}_{r=1}^R$ be the set of convex polytopes on which a ReLU-network $f:\R^d \rightarrow \R^K$ is an affine function, that is for every $k \in \{1,\ldots,R\}$ and $x \in Q_k$ there exists $V^k \in \R^{K \times d}$ and $c^k \in \R^K$ such that $f(x)=V^k x + c^k$. For any $x \in \R^d$ with $x\neq 0$ there exists $\alpha \in \R$ and $t \in \{1,\ldots,R\}$ such that $\beta x \in Q_t$ for all $\beta \geq \alpha$. 
\end{lemma}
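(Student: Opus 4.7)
The plan is to restrict attention to the ray $R_x = \{\beta x : \beta \geq 0\}$ and to study how the finitely many affine regions $Q_r$ cut it. Parametrize $R_x$ by $\beta \mapsto \beta x$ and for each $r \in \{1,\ldots,R\}$ let $I_r = \{\beta \geq 0 : \beta x \in Q_r\}$ be the pullback of $Q_r$ to the parameter line. Each $Q_r$ is a closed convex polytope (intersection of finitely many closed half-spaces), so $I_r$ is the intersection of the closed convex set $Q_r$ with the closed convex set $R_x$, pulled back through an affine injection; hence $I_r$ is a closed convex subset of $[0,\infty)$, i.e.\ either empty or an interval of the form $[a_r,b_r]$ with $0 \leq a_r \leq b_r \leq \infty$.

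Since $\bigcup_{r=1}^R Q_r = \R^d$, taking preimages under $\beta \mapsto \beta x$ yields $\bigcup_{r=1}^R I_r = [0,\infty)$. This is a finite cover of an unbounded set by intervals, so at least one of them must itself be unbounded above: if every $b_r$ were finite, then $B = \max_r b_r < \infty$ and the value $B+1 \in [0,\infty)$ would lie in no $I_r$, contradicting the covering property. Pick any $t$ with $b_t = \infty$; then $I_t = [a_t,\infty)$, so setting $\alpha = a_t$ we obtain $\beta x \in Q_t$ for every $\beta \geq \alpha$, which is the claim.

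The only step that requires any care is the convexity of $I_r$, which is really the only place the geometry of the $Q_r$ enters: without convexity of the affine regions, the pullback of a single region along the ray could be a disjoint union of intervals, and the finiteness argument above would not directly give a single $t$ that captures an entire tail of the ray. Given convexity of the polytopes $Q_r$ (guaranteed by the construction of ReLU linear regions as intersections of half-spaces coming from the activation patterns), the argument reduces to the elementary observation that a finite cover of a half-line by intervals must contain an interval extending to infinity, and no further analytic input is needed.
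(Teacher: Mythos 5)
Your argument is correct: the pullback of each convex region to the ray $\{\beta x:\beta\ge 0\}$ is an interval, a finite family of intervals covering $[0,\infty)$ must contain one that is unbounded above, and that region captures the tail of the ray. The paper itself states this lemma without proof, importing it from \cite{HeiAndBit2019}, and your argument is essentially the same one given there (which phrases it as: some region is visited at arbitrarily large $\beta$ by pigeonhole, and convexity then forces it to contain the whole tail), so there is nothing to add.
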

\MainTheorem*
\begin{proof}
We note that with a similar argument as in the derivation of \eqref{eq:upper-bound} it holds
\begin{equation}\label{eq:upper-bound2}
  \hat{p}(y| \beta x) \leq \hat{p}(i| \beta x)  + \frac{1}{K}\big(1-\hat{p}(i| \beta x)\big) = \frac{K-1}{K} \hat{p}(i|\beta x) + \frac{1}{K}.
\end{equation}
Using Lemma \ref{eq:polytope-asym} we know that there exist a polytope $Q_t$ such that $\beta x \in Q_t$ for all $\beta \geq \alpha$.
Thus for all $\beta \geq \alpha$ it holds that $\beta x \in Q_t$ so that
\[ \hat{p}(i|\beta x) = \frac{1}{1+e^{-g(\beta x)}} = \frac{1}{1+e^{\inner{W_g^{(L_g)},U \beta x+d}+b_g^{(L_g)}}}.\]
As $x^{(L-1)}_i(x)\geq 0$ for all $x \in \R^d$ it has to hold
$(\beta U x + d)_i \geq 0$ for all $\beta\geq \alpha$ and $i=1,\ldots,n_{L-1}$. This implies that $(Ux)_i \geq 0$ for all $i=1,\ldots,n_{L-1}$  and since $Ux \neq 0$ there has to exist at least one component $i^*$ such that $(Ux)_{i^*}>0$. Moreover, $W_g^{(L_g)}$ has strictly negative components and thus for all $\beta \geq \alpha$ it holds
\[ g(\beta x)= \inner{W_g^{(L_g)},U \beta x+d}+b_g^{(L_g)}
= \beta \inner{W_g^{(L_g)},U x} + \inner{W_g^{(L_g)},d} +b_g^{(L_g)}.\]
As $\inner{W_g^{(L_g)},U x}<0$ we get $\lim_{\beta \rightarrow \infty} g(x)=-\infty$ and thus
\[ \lim_{\beta \rightarrow \infty} \hat{p}(i|\beta x)=0.\]
Plugging this into \eqref{eq:upper-bound2} yields the result.
\end{proof}

\section{Experimental Details}\label{App:ExpDetails}

\begin{algorithm}
\caption{Training of ProoD}\label{alg:ProoD}
\begin{algorithmic}
\Require Training data on in-distribution $(X,Y)_{n=1}^N$ and out-distribution $(Z)_{m=1}^M$, untrained classifier $f_\theta$, untrained binary discriminator $g_\eta$, adversarial radius $\epsilon$, batch size $b$, number of classes $K$, bias shift $\Delta$
\For{$t$ in discriminator training steps} \Comment{Train the binary discriminator}
    \State $x \gets \rm{sample\ minibatch\ from\ } X$
    \State $z \gets \rm{sample\ minibatch\ from\ } Z$
    \State loss $\gets 0$
    \For{$r$ in $1..b$} \Comment{Parallelized in practice}
        \State $\underline{z}_r \gets \rm{Clip}(z_r-\epsilon, 0, 1)$ \Comment{Keeps perturbed values in box}
        \State $\overline{z}_r \gets \rm{Clip}(z_r+\epsilon, 0, 1)$
        \State $\overline{g}_r \gets \rm{IntervalBoundPropagation}(g_\eta, \underline{z}_r, \overline{z}_r)$ \Comment{An algorithm that provides upper bounds on the output of $g_\eta$}
        \State loss $ \gets \rm{loss} + \frac{1}{b}\log \! \left(1 + e^{-g_\eta(x_r)}\right) + \frac{1}{b} \log  \left(1 + e^{\bar{g}_r}\right) $
    \EndFor
    
    \State $\eta \gets$ SGD\_Update($\eta$, loss, $t$)
\EndFor
\State $g_\eta \gets g_\eta + \Delta$ \Comment{Apply the bias shift}
\For{$t$  in classifier training steps} \Comment{Train the classifier using semi-joint training}
    \State $x,y \gets \rm{sample\ minibatch\ from\ } X, Y$
    \State $z \gets \rm{sample\ minibatch\ from\ } Z$
    \State loss $\gets 0$
    \For{$r$ in $1..b$} 
        \State $p_i \gets \rm{SoftMax}(f_\theta(x_r), y_r) \cdot \rm{Sigmoid}(g_\eta(x_r))  +  \frac{1}{K}\big(1-\rm{Sigmoid}(g_\eta(x_r))\big) $
        \State $p_o \gets \frac{1}{K}\big(1-\rm{Sigmoid}(g_\eta(z_r))\big) + \frac{1}{K}\sum_{c=1}^K \rm{SoftMax}(f_\theta(z_r), c) \cdot \rm{Sigmoid}(g_\eta(z_r)) $
        \State loss $\gets \rm{loss}  + \frac{1}{b}p_i + \frac{1}{b}p_o$
    \EndFor
    \State $\theta \gets$ SGD\_Update($\theta$, loss, $t$) \Comment{Only $\theta$ gets updated - not $\eta$.}
\EndFor
\end{algorithmic}
\end{algorithm}

\begin{algorithm}
\caption{Training of ProoD-S}\label{alg:ProoD-S}
\begin{algorithmic}
\Require Training data on in-distribution $(X,Y)_{n=1}^N$ and out-distribution $(Z)_{m=1}^M$, untrained classifier $f_\theta$, untrained binary discriminator $g_\eta$, adversarial radius $\epsilon$, batch size $b$, number of classes $K$, bias shift $\Delta$
\For{$t$ in discriminator training steps} \Comment{Train the binary discriminator}
    \State $x \gets \rm{sample\ minibatch\ from\ } X$
    \State $z \gets \rm{sample\ minibatch\ from\ } Z$
    \State loss $\gets 0$
    \For{$r$ in $1..b$} \Comment{Parallelized in practice}
        \State $\underline{z}_r \gets \rm{Clip}(z_r-\epsilon, 0, 1)$ \Comment{Keeps perturbed values in box}
        \State $\overline{z}_r \gets \rm{Clip}(z_r+\epsilon, 0, 1)$
        \State $\overline{g}_r \gets \rm{IntervalBoundPropagation}(g_\eta, \underline{z}_r, \overline{z}_r)$ \Comment{An algorithm that provides upper bounds on the output of $g_\eta$}
        \State loss $ \gets \rm{loss} + \frac{1}{b}\log \! \left(1 + e^{-g_\eta(x_r)}\right) + \frac{1}{b} \log  \left(1 + e^{\bar{g}_r}\right) $
    \EndFor
    
    \State $\eta \gets$ SGD\_Update($\eta$, loss, $t$)
\EndFor
\State $g_\eta \gets g_\eta + \Delta$ \Comment{Apply the bias shift}
\For{$t$  in classifier training steps} \Comment{Train the classifier completely separately}
    \State $x,y \gets \rm{sample\ minibatch\ from\ } X, Y$
    \State $z \gets \rm{sample\ minibatch\ from\ } Z$
    \State loss $\gets 0$
    \For{$r$ in $1..b$} 
        \State $p_i \gets \rm{SoftMax}(f_\theta(x_r), y_r) $ \Comment{The difference to Alg.~\ref{alg:ProoD} is here.}
        \State $p_o \gets \frac{1}{K}\sum_{c=1}^K \rm{SoftMax}(f_\theta(z_r), c) $ \Comment{Note that $g_\eta$ does not appear in this loss.}
        \State loss $\gets \rm{loss}  + \frac{1}{b}p_i + \frac{1}{b}p_o$
    \EndFor
    \State $\theta \gets$ SGD\_Update($\theta$, loss, $t$)
\EndFor
\end{algorithmic}
\end{algorithm}

\paragraph{Datasets} We use CIFAR10 and CIFAR100~\cite{krizhevsky2009learning} (MIT license), SVHN~\cite{SVHN} (free for non-commercial use), LSUN~\cite{lsun} (no license), the ILSVRC2012 split of ImageNet \cite{imagenet_cvpr09,ILSVRC15} (free for non-commercial use), FGVC-Aircraft~\cite{maji2013fine} (free for non-commercial use), Stanford Cars~\cite{krause20133d} (free for non-commercial use), OpenImages~v4~\cite{kuznetsova2020open} (images have a CC BY 2.0 license), Oxford 102 Flower~\cite{nilsback2008automated} (no license) as well as 80M Tiny Images~\cite{torralba200880} (no license given, see also App.~\ref{App:OpenImages}). For the train/test splits we use the standard splits, except on 80M Tiny Images where we treat a random but fixed subset of 1000 images in the first 1,000,000 as our test set. For all datasets that get used as a test out-distribution we use a random but fixed subset of 1000 images.

Following~\cite{bitterwolf2020certifiably}, the smooth noise that is used is generated as follows. Uniform noise is generated and then smoothed using a Gaussian filter with a width that is drawn uniformly at random in $[1, 2.5]$. Each datapoint is then shifted and scaled linearly to ensure full range in $[0,1]$, i.e. $x' = \frac{x-\min (x)}{\max(x)-\min(x)}$.

\paragraph{Binary Training}
The architecture that we use for the binary discriminator is relatively shallow (5 linear layers). The architecture is shown in Table~\ref{tab:Architecture}. Our results are fairly robust to the exact choice of architecture and significantly larger models do not necessarily lead to better results as we show in App.~\ref{App:SizeAblation}. Similarly to \cite{zhang2020towards,bitterwolf2020certifiably}, we use long training schedules, running Adam for 1000 epochs, with an initial learning rate of $1e-4$ that we decrease by a factor of $5$ on epochs $500, 750$ and $850$ and with a batch size of 128 from the in-distribution and 128 from the out-distribution (for R.ImgNet: $50$ epochs with drops at 25, 35, 45, batch sizes $32$).
In order to avoid large losses we also use a simple ramp up schedule for the $\epsilon$ used in IBP and we downweight the out-distribution loss during the initial phase of training by a scalar $\kappa$. Both $\epsilon$ and $\kappa$ are increased linearly from $0$ to their final values ($0.01$ and $1$, respectively) over the first $300$ epochs (for R.ImgNet over the first 25 epochs).
Compared to the training of \cite{bitterwolf2020certifiably} which sometimes fails, we found that training of the binary discriminator is very stable and even 100 epochs on CIFAR would be sufficient, but we found that longer training lead to slightly better results.
Weight decay is set to $5\cdot 10^{-4}$, but is disabled for the weights in the final layer. 
As data augmentation we use AutoAugment \cite{cubuk2019autoaugment} for CIFAR and simple 4 pixel crops and reflections on R.ImgNet. 
The strict negativity of the weights leads to a negative bias of $g$ which can cause problems at an early stage if the $b_g^{(L_g)}$ is initialized at $0$ and thus we choose $3$ as initialization.
All binary discriminators were trained on single 2080Ti GPUs, managed on a SLURM cluster.
Overall, the training of a provable discriminator takes around 16h on CIFAR and 44h on R.ImgNet (wall clock time including evaluations and logging on each epoch).

\paragraph{Semi-Joint Training}
On CIFAR we train for $100$ epochs using SGD with momentum of $0.9$ and a learning rate of $0.1$ that drops by a factor of $10$ on epochs $50, 75$ and $90$ (on R.ImgNet 75 epochs with drops at 30 and 60).
For all datasets we train using a batch size of 128 (plus 128 out-distribution samples in the case of OE).
The CIFAR experiments were run on single 2080Ti GPUs. This takes about 4h20min in wall clock time.
In order to fit batches of 128 in-distribution samples and 128 out-distribution samples on R.ImgNet we had to train using 4 V100 GPUs in parallel. 
Because of batch normalization in multi-GPU training it is important to not simply stack the batches but to interlace in- and out-distribution samples. 
The wall clock time was around 15h for the semi-joint training on R.ImgNet.
For selecting the bias we use the procedure described in Section~\ref{sec:exp}. The trade-off curves for the AUC and GAUC on CIFAR100 and R.ImgNet are given in Figure~\ref{fig:AUC_vs_GAUC2}.

\begin{figure}[t]
    \centering
    \includegraphics[width=\textwidth]{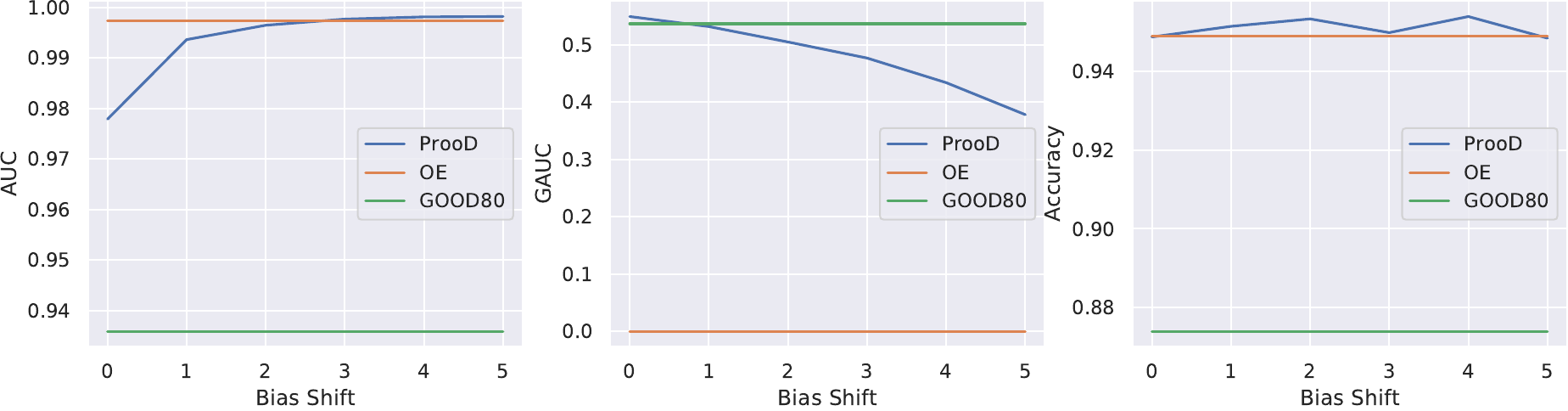} 
    \includegraphics[width=\textwidth]{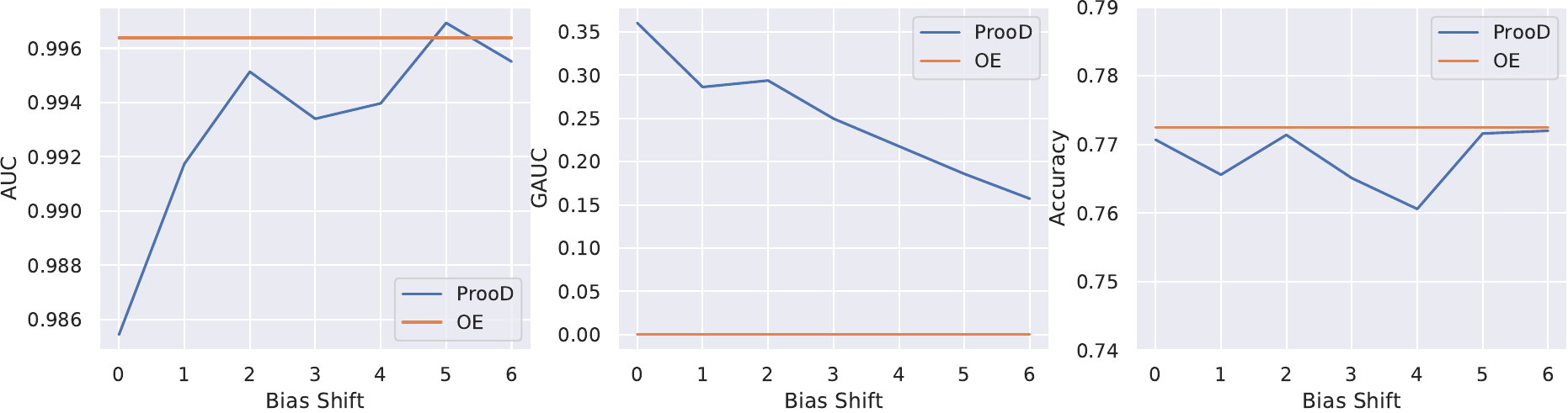} 
    \includegraphics[width=\textwidth]{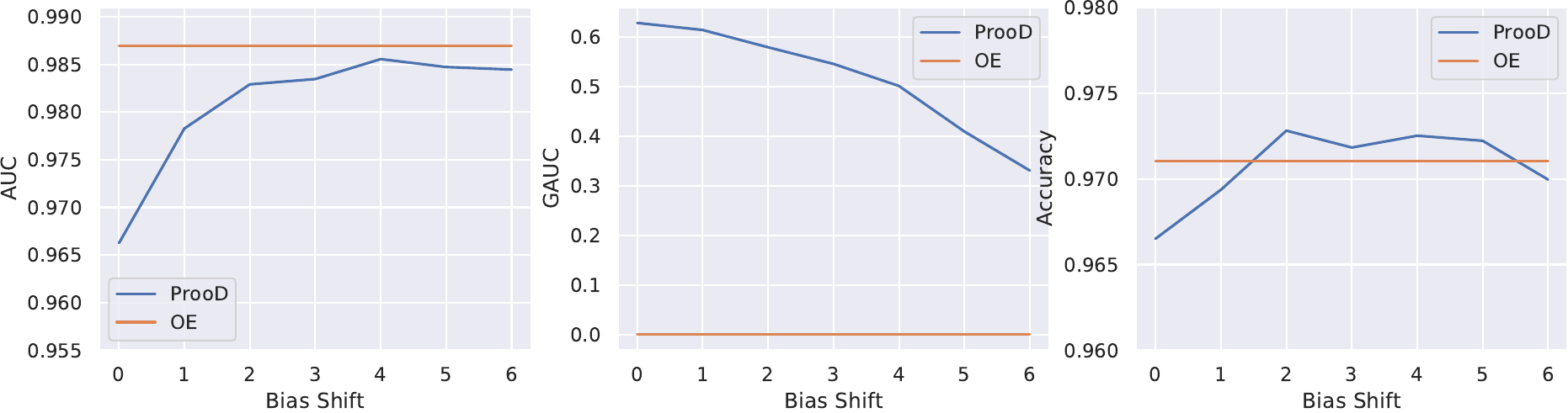} 
    \caption{\textbf{Bias selection for CIFAR100 and RImgNet:} Using CIFAR10 (top), CIFAR100 (middle) and R.ImgNet (bottom) as the in-distribution and the test set of OpenImages (or NotR.ImgNet respectively) as OOD we plot the test accuracy, AUC and GAUC as a function of the bias shift $\Delta$ (see Eq.~\eqref{eq:bias_shift}). }
    \label{fig:AUC_vs_GAUC2}
\end{figure}

\section{80M Tiny Images as Training Out-Distribution}\label{App:OpenImages}
The 80M Tiny Images dataset has been retracted by the authors due to concerns over offensive class labels~\cite{Birhane_2021_WACV}. 
We support the decision of the community to move away from the use of 80M Tiny Images, so we choose to train our CIFAR models using a downscaled version of OpenImages~v4~\cite{kuznetsova2020open} as a training out-distribution. 
However, since all prior work used this dataset, we present results on 80M Tiny Images here in order to compare \method{}'s performance to prior baselines. 
We encourage the community to use the results in Table~\ref{Tab:MainTable} for future comparisons.

Our results on 80M Tiny Images are shown in Table~\ref{Tab:OpenImagesResults}. Apart from the models shown in Table \ref{Tab:MainTable} we add here the pre-trained energy-based OOD detector EB from \cite{liu2020energy} as an additional baseline for clean OOD detection. As EB is not trained robustly, as expected EB has low AAUCs. In terms of clean OOD detection it performs similarly to OE but with worse results on the more difficult OOD detection task CIFAR10 vs CIFAR100 and vice versa. For GOOD we use the pre-trained models from \cite{bitterwolf2020certifiably}. For ATOM and ACET we use the pre-trained models from \cite{chen2020informative}. Note that these use the densenet architecture and were actually trained to withstand attacks in the much stronger threat model of $\epsilon=\frac{8}{255}$. In the original paper, the authors claim near perfect AAUCs on this task but we can show that this is a case of gross overestimation of robustness. Even on the easier threat model $\epsilon=0.01$ that we test in Table~\ref{Tab:OpenImagesResults}, their robustness is almost non-existent for both ATOM models and the CIFAR100 ACET model. The fact that their adversarial attacks were unable to find these samples clearly demonstrates that evaluating models adversarially is very difficult and potentially unreliable. Because of this we believe that our guarantees are a valuable contribution to the community.

\begin{table}[th]
    \centering
    \caption{\textbf{Training with 80M Tiny Images:} We repeat the evaluation from Table~\ref{Tab:MainTable} for models that were trained using 80M Tiny Images as out-distribution instead of OpenImages. Plain is identical to before and is just repeated for the reader's convenience. Note that the conclusions from the main paper still hold, which indicates that our method is robust to changes in the choice of training out-distribution. For ATOM and ACET we compare to pre-trained models from \cite{chen2020informative}. Note that these models show almost no robustness on CIFAR100 - despite the far stronger claims in \cite{chen2020informative}. Models with accuracy drop of $>3\%$ relative to the model with highest accuracy are grayed out. Of the remaining models, we highlight the best OOD detection performance. }
    \label{Tab:OpenImagesResults}
    \setlength{\tabcolsep}{1.5pt}
    \begin{tabu}{l|c|ccc|ccc|ccc|ccc}
    \toprule
In: CIFAR10 & & \multicolumn{3}{c}{CIFAR100} & \multicolumn{3}{c}{SVHN} & \multicolumn{3}{c}{LSUN\_CR} & \multicolumn{3}{c}{Smooth} \\
{} &    \small{Acc} &   \small{AUC} &  \small{GAUC} &  \small{AAUC} &   \small{AUC} &  \small{GAUC} &  \small{AAUC} &     \small{AUC} &  \small{GAUC} &  \small{AAUC} &    \small{AUC}&  \small{GAUC} &  \small{AAUC} \\
\midrule
Plain          &  95.01 &     90.0 &   0.0 &   0.6 &  93.8 &   0.0 &   0.1 &    93.1 &   0.0 &   0.5 &   98.2 &   0.0 &   0.6 \\
OE             &  \bf{95.53} &     \bf{96.1} &   0.0 &   6.0 &  99.2 &   0.0 &   0.4 &    99.5 &   0.0 &  15.2 &   99.0 &   0.0 &  11.3 \\
EB$^\star$ &  95.22 &     93.8 &  0.0 &  2.8 &  99.3 &  0.0 &  0.0 &    99.5 &  0.0 &  6.0 &   99.4 &  0.0 &  3.5 \\
ATOM$^\dagger$           &  95.20 &     93.7 &   0.0 &  14.4 &  \bf{99.6} &   0.0 &   8.6 &    \bf{99.7} &   0.0 &  40.0 &   99.6 &   0.0 &  18.8 \\
\rowfont{\color{Gray}}
ACET$^\dagger$           &  91.48 &     91.2 &   0.0 &  80.5 &  95.3 &   0.0 &  87.6 &    98.9 &   0.0 &  95.0 &   99.9 &   0.0 &  98.3 \\
\rowfont{\color{Gray}}
GOOD$_{80}$*         &  90.13 &     87.2 &  42.5 &  63.9 &  94.2 &  37.5 &  67.4 &    93.3 &  55.2 &  83.6 &   95.3 &  57.3 &  88.5 \\
\rowfont{\color{Gray}}
GOOD$_{100}$*       &  90.14 &     70.7 &  54.5 &  55.0 &  74.9 &  56.3 &  56.6 &    75.2 &  61.0 &  61.6 &   81.4 &  66.6 &  67.5 \\
\rowfont{\color{Gray}}
\method{}-Disc &    -   &     67.4 &  61.0 &  61.7 &  73.2 &  65.5 &  66.4 &    78.0 &  72.2 &  72.7 &   82.3 &  71.5 &  72.9 \\
\method{} $\Delta\!=\! 3$ &  95.47 &     96.0 &  \bf{41.9} &  \bf{43.9} &  99.5 &  \bf{48.8} &  \bf{49.4} &    99.6 &  \bf{47.6} &  \bf{53.1} &   \bf{99.7} &  \bf{55.8} &  \bf{57.0} \\
\midrule
In: CIFAR100 & & \multicolumn{3}{c}{CIFAR10} & \multicolumn{3}{c}{SVHN} & \multicolumn{3}{c}{LSUN\_CR} & \multicolumn{3}{c}{Smooth} \\
{} &    \small{Acc} &   \small{AUC} &  \small{GAUC} &  \small{AAUC} &   \small{AUC} &  \small{GAUC} &  \small{AAUC} &     \small{AUC} &  \small{GAUC} &  \small{AAUC} &    \small{AUC}&  \small{GAUC} &  \small{AAUC} \\
\midrule
Plain           &  \bf{77.38} &    77.7 &   0.0 &   0.3 &  81.9 &   0.0 &   0.2 &    76.4 &   0.0 &   0.3 &   88.8 &   0.0 &   0.5 \\
OE              &  77.28 &    \bf{83.9} &   0.0 &   0.8 &  92.8 &   0.0 &   0.1 &    \bf{97.4} &   0.0 &   4.6 &   97.6 &   0.0 &   0.9 \\
EB$^\star$             &  75.70 &    77.4 &  0.0 &  0.8 &  \bf{96.5} &  0.0 &  0.0 &    96.7 &  0.0 &  5.9 &   \bf{98.9} &  0.0 &  4.3 \\
ATOM$^\dagger$            &  75.06 &    64.3 &   0.0 &   0.2 &  93.6 &   0.0 &   0.2 &    97.5 &   0.0 &   9.3 &   98.5 &   0.0 &  15.0 \\
ACET$^\dagger$            &  74.43 &    79.8 &   0.0 &   0.2 &  90.2 &   0.0 &   0.0 &    96.0 &   0.0 &   2.1 &   92.9 &   0.0 &   0.3 \\
\rowfont{\color{Gray}}
\method{}-Disc  &    -   &    53.8 &  50.3 &  50.4 &  73.1 &  69.8 &  69.9 &    68.1 &  63.8 &  64.0 &   67.2 &  63.8 &  63.9 \\
\method{} $\Delta\!=\! 1$   &  76.79 &    80.5 &  \bf{23.1} &  \bf{23.2} &  93.7 &  \bf{33.9} &  \bf{34.0} &    97.2 &  \bf{29.6} &  \bf{30.4} &   \bf{98.9} &  \bf{29.7} &  \bf{31.3} \\
\bottomrule
    \end{tabu}
        \begin{flushleft}
    $^\star$ Pre-trained WideResnet from \cite{liu2020energy}. \\
    $^\dagger$Densenet architecture, using models from \cite{chen2020informative} pre-trained with $\epsilon=\frac{8}{255}$.\\
    *CNN architecture using pre-trained models from \cite{bitterwolf2020certifiably}.
    \end{flushleft}
\end{table}

\section{False Positive Rates}\label{App:FPR}
Since in a practical setting a threshold for OOD detection ultimately has to be chosen, it can be interesting to study the false positive rate at a fixed threshold. It is relatively standard to pick the false positive rate at 95\% true positive rate (called FPR in Table~\ref{Tab:FPR_Table}), where low values are desirable. We show the results for all methods and datasets in Table~\ref{Tab:FPR_Table}. Although \method{} has similarly good performance as OE on this task, it fails to give non-trivial guarantees. Therefore achieving stronger bounds on the worst-case FPR is an interesting task for future work.

\begin{table}[th]
    \centering
    \caption{\textbf{False positive rates:} For all models we report accuracy on the test of the in-distribution and the false positive rate at 95\% true positive rate (FPR) (smaller is better). We also show the adversarial FPR (AFPR) and the guaranteed FPR (GFPR) for different test out-distributions. The radius of the $l_\infty$-ball for the adversarial manipulations of the OOD data is $\epsilon=0.01$ for all datasets. The bias shift $\Delta$ that was used for \method{} is shown for each in-distribution. \method{} struggles to give non-trivial guarantees for the FPR@95\% on most datasets. However, different from GOOD or \method{}-Disc, the clean performance is generally as good as that of OE. Models with accuracy drop of $>3\%$ relative to the model with highest accuracy are grayed out. Of the remaining models, we highlight the best OOD detection performance.}
    \label{Tab:FPR_Table}
    \setlength{\tabcolsep}{2.3pt}
    \begin{tabu}{l|c|ccc|ccc|ccc|ccc}
    \toprule
In: CIFAR10 & & \multicolumn{3}{c}{CIFAR100} & \multicolumn{3}{c}{SVHN} & \multicolumn{3}{c}{LSUN\_CR} & \multicolumn{3}{c}{Smooth} \\
{} &    \small{Acc} &   \small{FPR} &  \small{GFPR} &  \small{AFPR} &   \small{FPR} &  \small{GFPR} &  \small{AFPR} &     \small{FPR} &  \small{GFPR} &  \small{AFPR} &    \small{FPR}&  \small{GFPR} &  \small{AFPR} \\
\midrule
Plain      &  \bf{95.01} &     56.3 &   100.0 &   100.0 &   40.7 &   100.0 &   100.0 &    46.7 &   100.0 &   100.0 &   10.6 &   100.0 &   100.0 \\
OE         &  94.91 &     52.2 &   100.0 &    99.9 &   15.4 &   100.0 &   100.0 &     \bf{0.0} &   100.0 &    99.0 &    \bf{0.0} &   100.0 &    99.0 \\
ATOM &  93.63 &     73.4 &   100.0 &    \bf{98.8} &   33.9 &   100.0 &   100.0 &    85.3 &   100.0 &   100.0 &    \bf{0.0} &   100.0 &    \bf{86.1} \\
ACET       &  93.43 &     65.4 &   100.0 &    99.5 &    \bf{3.0} &   100.0 &    \bf{99.8} &    62.7 &   100.0 &   100.0 &    \bf{0.0} &   100.0 &    89.0 \\
\rowfont{\color{Gray}}
GOOD$_{80}$     &  87.39 &     65.1 &   100.0 &    84.8 &   26.8 &   100.0 &    48.8 &     6.0 &   100.0 &    24.7 &   19.6 &   100.0 &    52.8 \\
\rowfont{\color{Gray}}
GOOD$_{100}$    &  86.96 &     84.8 &   100.0 &    99.3 &   87.9 &   100.0 &    99.7 &    66.0 &   100.0 &    99.0 &   68.6 &   100.0 &    98.2 \\
\rowfont{\color{Gray}}
ProoD-Disc &    -   &     83.9 &    87.5 &    87.2 &   76.9 &    84.9 &    84.2 &    76.7 &    85.3 &    84.7 &   96.6 &    98.4 &    98.4 \\
ProoD $\Delta\!=\! 3$    &  94.99 &     \bf{48.0} &    \bf{99.9} &    99.7 &    9.1 &   100.0 &   100.0 &     \bf{0.0} &   100.0 &    \bf{98.7} &    \bf{0.0} &   100.0 &   100.0 \\
\midrule
In: CIFAR100 & & \multicolumn{3}{c}{CIFAR10} & \multicolumn{3}{c}{SVHN} & \multicolumn{3}{c}{LSUN\_CR} & \multicolumn{3}{c}{Smooth} \\
{} &    \small{Acc} &   \small{FPR} &  \small{GFPR} &  \small{AFPR} &   \small{FPR} &  \small{GFPR} &  \small{AFPR} &     \small{FPR} &  \small{GFPR} &  \small{AFPR} &    \small{FPR}&  \small{GFPR} &  \small{AFPR} \\
\midrule
Plain      &  \bf{77.38} &    80.1 &   100.0 &   100.0 &   77.3 &   100.0 &   100.0 &    79.0 &   100.0 &   100.0 &   70.0 &   100.0 &   100.0 \\
OE         &  77.25 &    81.8 &   100.0 &   100.0 &   37.7 &   100.0 &   100.0 &     0.0 &   100.0 &    99.7 &    \bf{0.0} &   100.0 &   100.0 \\
\rowfont{\color{Gray}}
ATOM       &  68.32 &    81.3 &   100.0 &    99.6 &   51.0 &   100.0 &    97.4 &    26.3 &   100.0 &    94.7 &   10.0 &   100.0 &    86.0 \\
\rowfont{\color{Gray}}
ACET       &  73.02 &    87.9 &   100.0 &   100.0 &    8.2 &   100.0 &   100.0 &    87.0 &   100.0 &   100.0 &    0.0 &   100.0 &    98.2 \\
\rowfont{\color{Gray}}
ProoD-Disc &    -   &    95.9 &    97.5 &    97.5 &   91.3 &    92.8 &    92.7 &    91.7 &    95.3 &    95.0 &  100.0 &   100.0 &   100.0 \\
ProoD $\Delta\!=\! 5$    &  77.16 &    \bf{82.2} &   100.0 &   100.0 &   \bf{37.6} &   100.0 &   100.0 &     \bf{0.0} &   100.0 &    \bf{99.3} &    3.4 &   100.0 &   100.0 \\
\midrule
In: R.ImgNet & & \multicolumn{3}{c}{Flowers} & \multicolumn{3}{c}{FGVC} & \multicolumn{3}{c}{Cars} & \multicolumn{3}{c}{Smooth} \\
{} &    \small{Acc} &   \small{FPR} &  \small{GFPR} &  \small{AFPR} &   \small{FPR} &  \small{GFPR} &  \small{AFPR} &     \small{FPR} &  \small{GFPR} &  \small{AFPR} &    \small{FPR}&  \small{GFPR} &  \small{AFPR} \\
\midrule
Plain      &  96.34 &    55.2 &   100.0 &   100.0 &   48.2 &   100.0 &   100.0 &   75.2 &   100.0 &   100.0 &    \bf{0.0} &   100.0 &   100.0 \\
OE         &  97.10 &    \bf{18.2} &   100.0 &   100.0 &    \bf{0.2} &   100.0 &   100.0 &    \bf{0.0} &   100.0 &   100.0 &    \bf{0.0} &   100.0 &   100.0 \\
\rowfont{\color{Gray}}
ProoD-Disc &     -    &    59.2 &    65.2 &    65.0 &   51.0 &    67.8 &    66.5 &   51.7 &    63.7 &    62.3 &  100.0 &   100.0 &   100.0 \\
ProoD $\Delta\!=\! 4$    &  \bf{97.25} &    18.5 &   100.0 &   100.0 &    0.5 &   100.0 &   100.0 &    \bf{0.0} &   100.0 &   100.0 &    \bf{0.0} &   100.0 &   100.0 \\
\bottomrule
    \end{tabu}
\end{table}

\section{Additional Datasets}\label{App:AddDatasets}
In addition to the results shown in Table~\ref{Tab:MainTable}, it is interesting to study how \method{} performs on additional datasets as well as the test set of the out-distribution it was trained on. For the CIFAR datasets, we report LSUN crops, LSUN\_resize, Places365~\cite{zhou2017places}, iSUN~\cite{xu15arXiv}, Textures~\cite{cimpoi2014describing}, 80M Tiny Images and uniform noise in Table~\ref{Tab:AdditionalDatasets}. On RImgNet we report uniform noise and the training out-distribution in Table~\ref{Tab:AdditionalDatasetsRImgNet}.

As in Table~\ref{Tab:MainTable} the clean performance of \method{} is comparable to that of OE, but it achieves non-trivial GAUC. On CIFAR10, GOOD$_{100}$ achieves almost perfect GAUC against uniform noise, which comes at the price of significantly worse clean AUCs on all other out-distributions, see Table \ref{Tab:MainTable}. 
Almost all methods achieve very high AAUCs on uniform noise, but it is not clear if a sufficiently powerful attack could lower those scores significantly. 
The unusually large gap between the GAUC and AAUC of ProoD would seem to indicate that that might be the case.
Surprisingly, the robustness characteristics of both ATOM and ACET vary wildly between the different datasets, sometimes appearing to be perfectly robust and on other datasets displaying no robustness at all.
Especially surprising are the relatively low AAUCs on the test set of the training out-distribution.

\begin{table}[th]
    \centering
    \caption{\textbf{Additional datasets:} We show the AUC, AAUC and GAUC for all models on uniform noise and on the test set of the train out-distribution. Models with accuracy drop of $>3\%$ relative to the model with highest accuracy are grayed out. Of the remaining models, we highlight the best OOD detection performance.}
    \label{Tab:AdditionalDatasets}
    \setlength{\tabcolsep}{1.2pt}
    \begin{tabu}{l|c|ccc|ccc|ccc|ccc}
    \toprule
{In: CIFAR10} & & \multicolumn{3}{c}{LSUN} & \multicolumn{3}{c}{LSUN\_resize} & \multicolumn{3}{c}{Places365} & \multicolumn{3}{c}{iSUN} \\
{} &    \small{Acc} &   \small{AUC} &  \small{GAUC} &  \small{AAUC} &   \small{AUC} &  \small{GAUC} &  \small{AAUC} &  \small{AUC} &  \small{GAUC} &  \small{AAUC} &   \small{AUC} &  \small{GAUC} &  \small{AAUC} \\
\midrule
Plain      &  \bf{95.01} &  95.9 &   0.0 &   1.3 &        95.0 &   0.0 &   8.7 &      89.5 &   0.0 &   0.3 &   94.5 &   0.0 &  10.4 \\
OE         &  94.91 &  98.7 &   0.0 &   0.9 &        97.4 &   0.0 &   6.1 &      \bf{99.9} &   0.0 &   2.5 &   97.6 &   0.0 &   9.4 \\
ATOM       &  93.63 &  77.3 &   0.0 &  12.1 &       \bf{100.0} &   0.0 &  \bf{98.4} &      82.6 &   0.0 &  22.0 &  \bf{100.0} &   0.0 &  \bf{98.8} \\
ACET       &  93.43 &  89.2 &   0.0 &   2.5 &       \bf{100.0} &   0.0 &  91.3 &      88.0 &   0.0 &   3.4 &  \bf{100.0} &   0.0 &  92.1 \\
\rowfont{\color{Gray}}
GOOD80     &  87.39 &  96.5 &  68.4 &  89.7 &        87.4 &  61.7 &  69.5 &      96.8 &  58.8 &  90.2 &   87.1 &  58.9 &  70.3 \\
\rowfont{\color{Gray}}
GOOD100    &  86.96 &  95.0 &  86.0 &  86.7 &        81.1 &  67.6 &  67.9 &      74.4 &  59.7 &  60.9 &   77.5 &  63.3 &  65.2 \\
\rowfont{\color{Gray}}
ProoD-Disc &    -   &  95.8 &  94.1 &  94.2 &        76.4 &  70.3 &  71.5 &      76.6 &  71.1 &  71.5 &   74.9 &  69.0 &  70.3 \\
ProoD $\Delta\!=\! 3$     &  94.99 &  \bf{99.2} &  \bf{82.2} &  \bf{82.3} &        97.1 &  \bf{57.7} &  59.2 &      \bf{99.9} &  \bf{58.7} &  \bf{59.6} &   97.1 &  \bf{56.4} &  58.1 \\
\midrule
In: CIFAR10 & & \multicolumn{3}{c}{Uniform} & \multicolumn{3}{c}{Textures} & \multicolumn{3}{c}{80M Tiny Images} &  \multicolumn{3}{c}{OpenImages}  \\
{} &    \small{Acc} &   \small{AUC} &  \small{GAUC} &  \small{AAUC} &   \small{AUC} &  \small{GAUC} &  \small{AAUC} &   \small{AUC} &  \small{GAUC} &  \small{AAUC} &   \small{AUC} &  \small{GAUC} &  \small{AAUC} \\
\midrule
Plain      &  \bf{95.01} &    97.9 &   0.0 &   83.1 &     92.0 &   0.0 &   8.6 &  91.1 &   0.0 &   0.8 &     84.0 &   0.0 &   0.4 \\
OE         &  94.91 &    99.9 &   0.0 &   98.2 &     \bf{99.9} &   0.0 &  13.8 &  93.6 &   0.0 &   0.6 &     99.7 &   0.0 &   2.5 \\
ATOM       &  93.63 &   \bf{100.0} &   0.0 &  \bf{100.0} &     97.5 &   0.0 &  \bf{74.7} &  82.8 &   0.0 &  26.7 &     73.4 &  0.0 &  22.3 \\
ACET       &  93.43 &   \bf{100.0} &   0.0 &   99.9 &     98.0 &   0.0 &  41.6 &  89.0 &   0.0 &   7.8 &     81.7 &   0.0 &   5.6 \\
\rowfont{\color{Gray}}
GOOD80     &  87.39 &    99.1 &  83.9 &   98.1 &     96.1 &  54.7 &  89.9 &  84.7 &  50.5 &  66.8 &     93.6 &  53.7 &  85.6 \\
\rowfont{\color{Gray}}
GOOD100    &  86.96 &    94.6 &  86.1 &   89.3 &     71.0 &  49.9 &  57.1 &  74.5 &  56.7 &  58.2 &     69.5 &  54.2 &  55.6 \\
\rowfont{\color{Gray}}
ProoD-Disc &   -    &    53.8 &  46.7 &   46.7 &     75.3 &  69.9 &  70.5 &  75.7 &  69.8 &  70.9 &  64.8 &  59.0 &  59.6 \\
ProoD $\Delta\!=\! 3$     &  94.99 &    99.9 &  \bf{35.0} &   94.7 &     \bf{99.9} &  \bf{57.6} &  63.1 &  \bf{93.8} &  \bf{57.5} &  \bf{59.0} &     \bf{99.8} &  \bf{47.7} &  \bf{49.7} \\
\midrule
{In: CIFAR100} & & \multicolumn{3}{c}{LSUN} & \multicolumn{3}{c}{LSUN\_resize} & \multicolumn{3}{c}{Places365} & \multicolumn{3}{c}{iSUN} \\
{} &    \small{Acc} &   \small{AUC} &  \small{GAUC} &  \small{AAUC} &   \small{AUC} &  \small{GAUC} &  \small{AAUC} &  \small{AUC} &  \small{GAUC} &  \small{AAUC} &   \small{AUC} &  \small{GAUC} &  \small{AAUC} \\
\midrule
Plain      &  \bf{77.38} &  84.5 &   0.0 &   0.9 &        78.9 &   0.0 &   3.9 &      75.7 &   0.0 &   0.6 &   79.4 &   0.0 &   5.6 \\
OE         &  77.25 &  94.5 &   0.0 &   1.3 &        \bf{88.2} &   0.0 &   2.1 &      \bf{99.7} &   0.0 &   1.9 &   \bf{88.3} &   0.0 &   3.1 \\
\rowfont{\color{Gray}}
ATOM       &  68.32 &  83.4 &   0.0 &  66.7 &        92.1 &   0.0 &  71.3 &      87.0 &   0.0 &  62.1 &   91.7 &   0.0 &  73.7 \\
\rowfont{\color{Gray}}
ACET       &  73.02 &  81.3 &   0.0 &   4.3 &       100.0 &   0.0 &  77.0 &      74.8 &   0.0 &   4.3 &  100.0 &   0.0 &  78.2 \\
\rowfont{\color{Gray}}
ProoD-Disc &   -    &  81.4 &  78.8 &  79.3 &        70.1 &  66.7 &  67.1 &      71.7 &  68.2 &  68.5 &   69.3 &  65.6 &  66.0 \\
ProoD $\Delta\!=\! 5$     &  77.16 &  \bf{94.8} &  \bf{28.5} &  \bf{28.7} &        86.3 &  \bf{22.6} &  \bf{22.8} &  \bf{99.7} &  \bf{23.6} &  \bf{23.8} &   87.2 &  \bf{22.1} &  \bf{22.4} \\
\midrule
In: CIFAR100 & & \multicolumn{3}{c}{Uniform} & \multicolumn{3}{c}{Textures} & \multicolumn{3}{c}{80M Tiny Images} &  \multicolumn{3}{c}{OpenImages}  \\
{} &    \small{Acc} &   \small{AUC} &  \small{GAUC} &  \small{AAUC} &   \small{AUC} &  \small{GAUC} &  \small{AAUC} &   \small{AUC} &  \small{GAUC} &  \small{AAUC} &   \small{AUC} &  \small{GAUC} &  \small{AAUC} \\
\midrule
Plain      &  \bf{77.38} &    82.7 &   0.0 &   53.0 &     77.5 &   0.0 &   5.3 &  79.7 &   0.0 &   1.2 &     75.5 &   0.0 &   0.7 \\
OE         &  77.25 &    99.3 &   0.0 &   \bf{91.4} &     99.0 &   0.0 &   7.5 &  80.2 &   0.0 &   1.5 &     99.6 &   0.0 &   1.0 \\
\rowfont{\color{Gray}}
ATOM       &  68.32 &   100.0 &   0.0 &  100.0 &     90.0 &   0.0 &  66.6 &  90.1 &   0.0 &  70.0 &     83.7 &   0.0 &  58.9 \\
\rowfont{\color{Gray}}
ACET       &  73.02 &   100.0 &   0.0 &   99.9 &     91.5 &   0.0 &  21.0 &  77.0 &   0.0 &   6.8 &     74.5 &   0.0 &   3.0 \\
\rowfont{\color{Gray}}
ProoD-Disc &   -    &    40.9 &  37.0 &   37.4 &     53.6 &  50.2 &  50.3 &  59.7 &  56.7 &  56.9 &  58.3 &  54.6 &  54.8 \\
ProoD $\Delta\!=\! 5$     &  77.16 &    \bf{99.7} &  \bf{12.1} &   87.0 &     \bf{99.1} &  \bf{16.8} &  \bf{22.6} &  \bf{80.5} &  \bf{19.5} &  \bf{19.7} &   \bf{99.7} &  \bf{18.6}  &  \bf{20.1} \\
\bottomrule
    \end{tabu}
\end{table}

\begin{table}[th]
    \centering
    \caption{\textbf{Additional datasets:} For RImgNet, we show the AUC, AAUC and GAUC for all models on uniform noise and on the test set of the train out-distribution, i.e. NotRImgNet.}
    \label{Tab:AdditionalDatasetsRImgNet}
    \setlength{\tabcolsep}{8.pt}
    \begin{tabu}{l|c|ccc|ccc}
    \toprule
In: R.ImgNet & & \multicolumn{3}{c}{Uniform} & \multicolumn{3}{c}{NotR.ImgNet}  \\
{} &    \small{Acc} &   \small{AUC} &  \small{GAUC} &  \small{AAUC} &   \small{AUC} &  \small{GAUC} &  \small{AAUC}  \\
\midrule
Plain      &  96.34 &    99.3 &   0.0 &  74.9 &       91.7 &   0.0 &   0.2 \\
OE         &  97.10 &    99.6 &   0.0 &  84.6 &       \bf{98.7} &   0.0 &   1.2 \\
\rowfont{\color{Gray}}
ProoD-Disc &    -     &    99.7 &  99.2 &  99.3 &       73.6 &  69.9 &  69.9 \\
ProoD $\Delta\!=\! 4$    &  \bf{97.25} &    \bf{99.8} &  \bf{79.7} &  \bf{95.2} &       98.6 &  \bf{50.1} &  \bf{51.3} \\
\bottomrule
    \end{tabu}
\end{table}


\section{Size Ablation for Binary Discriminator}\label{App:SizeAblation}
Since larger models should typically lead to better performance, we investigated the impact that model size has on the performance of our binary discriminator. We retrained ProoD-Disc models with CIFAR10 as in-distribution and 80M Tiny images as the out-distribution (since our dataloader is faster than for OpenImages which speeds up training). Since longer schedules only slightly improve results we also used shorter schedules with only 300 epochs where $\epsilon$ and $\kappa$ linearly increase from $0$ to $0.01$ and $1.0$ respectively within the first 100 epochs and the learning rate drops occur at 150, 200 and 250 epochs. As architectures we use 10 different CNNs with different widths and depths ranging from 5 to 8 layers (for the precise architectures please refer to sizes $\lbrace$S, XL\_b, XS, SR, SR2, C1, C3s, C3, C2, C4$\rbrace$ in the file \texttt{provable\_classifiers.py} of the code provided at \url{https://github.com/AlexMeinke/Provable-OOD-Detection}). We present scatter plots of the models' their performance on CIFAR100 and 80M Tiny Images in Figure~\ref{fig:SizeAblation} against their size. Clearly, there is no correlation between model size and performance and most differences are rather small, justifying our choice of a fairly small architecture in the main paper.

\begin{figure}[htb]
    \centering
    \includegraphics[width=.4\textwidth]{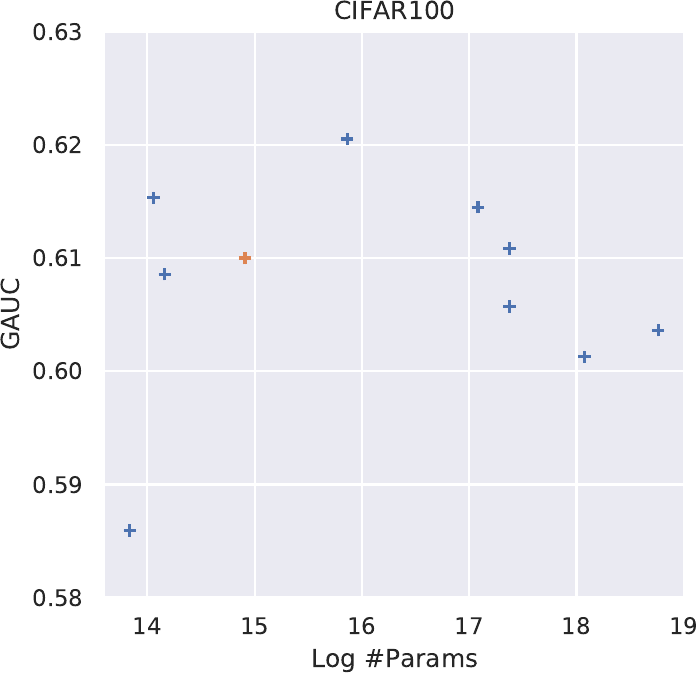} \qquad \includegraphics[width=.4\textwidth]{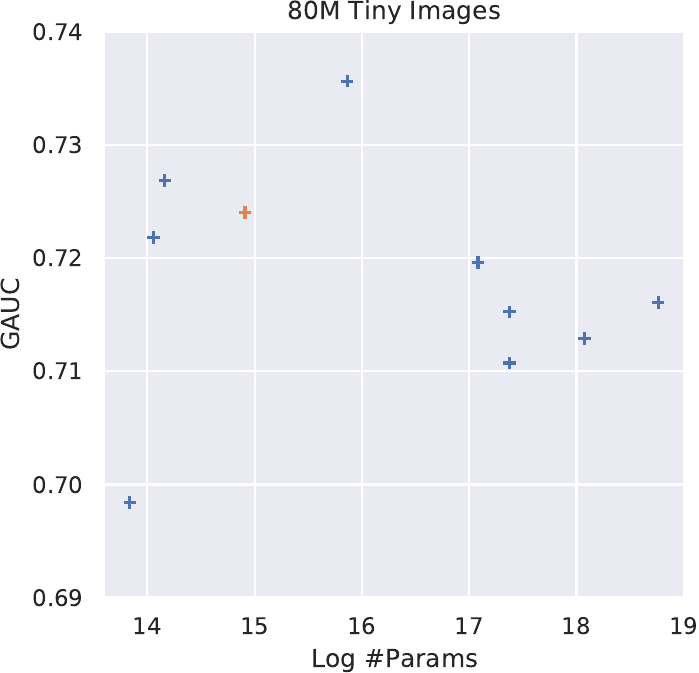}
    \caption{\textbf{Bigger models do not yield better guarantees:} We show scatter plots of the GAUC of different architectures against the log of the number of trainable parameters in the model. The orange cross indicates the architecture that is used in the main paper. There is no clear dependence of performance on model size, so it is preferable to use fairly small models.}
    \label{fig:SizeAblation}
\end{figure}

\section{Generalization to Larger Threat Model}\label{App:LargerEpsilon}
Since $\epsilon=0.01$ is a relatively weak threat model we evaluate if ProoD's guarantees actually generalize to the much stronger $\epsilon=\frac{8}{255}\approx 0.031$ that is standard in much of the literature on adversarial robustness. We use the exact same CIFAR models from Table~\ref{Tab:MainTable} and show the results of our evaluation at $\epsilon=\frac{8}{255}$ in Table~\ref{Tab:LargerEpsilon}.

Perhaps surprisingly, ProoD's guarantees generalize remarkably well to the much larger radius on CIFAR10. The same holds for GOOD, as was already observed in~\cite{bitterwolf2020certifiably}. ATOM and ACET are not robust at this radius, despite having lower accuracy, lower clean OOD performance and more expensive training. On the other hand ProoD's guarantees on CIFAR100 are quite weak at this radius, but given ATOM's and ACET's low AAUCs here and the fact that GOOD cannot be trained at all on CIFAR100, the results are not worse than for the competitors.
\begin{table}[t!]
    \centering
    \caption{\textbf{Generalization to Larger $\epsilon$:} We evaluate all CIFAR models in Table~\ref{Tab:MainTable} using an $\epsilon=\frac{8}{255}$, and thus an unseen threat model. The provable methods GOOD and ProoD generalize surprisingly well, while neither ATOM nor ACET display any generalization to the larger threat model. 
l    }
    \label{Tab:LargerEpsilon}
    \setlength{\tabcolsep}{1.5pt}
    \begin{tabu}{l|c|ccc|ccc|ccc|ccc}
    \toprule
In: CIFAR10 & & \multicolumn{3}{c}{CIFAR100} & \multicolumn{3}{c}{SVHN} & \multicolumn{3}{c}{LSUN\_CR} & \multicolumn{3}{c}{Smooth} \\
{} &    \small{Acc} &   \small{AUC} &  \small{GAUC} &  \small{AAUC} &   \small{AUC} &  \small{GAUC} &  \small{AAUC} &     \small{AUC} &  \small{GAUC} &  \small{AAUC} &    \small{AUC}&  \small{GAUC} &  \small{AAUC} \\
\midrule
Plain        &  \bf{95.01} &     90.0 &   0.0 &   0.0 &  93.8 &   0.0 &   0.0 &    93.1 &   0.0 &   0.0 &   98.0 &   0.0 &   0.0 \\
OE           &  94.91 &  \bf{91.1} &   0.0 &   0.1 &  97.3 &   0.0 &   0.0 &   \bf{100.0} &   0.0 &   0.1 &   \bf{99.9} &   0.0 &   0.0 \\
ATOM         &  93.63 &     78.3 &   0.0 &   1.3 &  94.4 &   0.0 &   1.5 &    79.8 &   0.0 &   0.2 &   99.5 &   0.0 &   9.6 \\
ACET         &  93.43 &     86.0 &   0.0 &   1.1 &  \bf{99.3} &   0.0 &   1.1 &    89.2 &   0.0 &   0.8 &   \bf{99.9} &   0.0 &   3.8 \\
\rowfont{\color{Gray}}
GOOD80*       &  87.39 &     76.7 &  37.5 &  51.6 &  90.8 &  38.6 &  74.3 &    97.4 &  57.6 &  90.2 &   96.2 &  61.1 &  87.8 \\
\rowfont{\color{Gray}}
GOOD100*      &  86.96 &     67.8 &  39.4 &  43.5 &  62.6 &  29.0 &  30.9 &    84.9 &  67.6 &  70.7 &   87.0 &  63.3 &  69.2 \\
\rowfont{\color{Gray}}
ProoD-Disc   &   -    &       62.9 &  44.1 &  46.1 &  72.6 &  52.5 &  57.1 &    78.1 &  56.3 &  58.9 &   59.2 &  34.9 &  37.2 \\
ProoD $\Delta\!=\! 3$ &  94.99 &  89.8 &  \bf{39.2} &  \bf{41.0} &  98.3 &  \bf{46.9} &  \bf{50.8} &   \bf{100.0} &  \bf{50.2} &  \bf{52.7} &   \bf{99.9} &  \bf{30.4} &  \bf{30.6} \\
\midrule
In: CIFAR100 & & \multicolumn{3}{c}{CIFAR10} & \multicolumn{3}{c}{SVHN} & \multicolumn{3}{c}{LSUN\_CR} & \multicolumn{3}{c}{Smooth} \\
{} &    \small{Acc} &   \small{AUC} &  \small{GAUC} &  \small{AAUC} &   \small{AUC} &  \small{GAUC} &  \small{AAUC} &     \small{AUC} &  \small{GAUC} &  \small{AAUC} &    \small{AUC}&  \small{GAUC} &  \small{AAUC} \\
\midrule
Plain      &  \bf{77.38} &    \bf{77.7} &   0.0 &   0.4 &  81.9 &   0.0 &   0.2 &    76.4 &   0.0 &   0.3 &   86.6 &   0.0 &   0.3 \\
OE         &  77.25 &    77.4 &   0.0 &   0.2 &  \bf{92.3} &   0.0 &   0.0 &   \bf{100.0} &   0.0 &   0.7 &  \bf{99.5} &   0.0 &   0.5 \\
\rowfont{\color{Gray}}
ATOM       &  68.32 &    78.3 &   0.0 &  10.4 &  91.1 &   0.0 &  15.2 &    95.9 &   0.0 &  23.0 &   98.2 &   0.0 &  23.5 \\
\rowfont{\color{Gray}}
ACET       &  73.02 &    73.0 &   0.0 &   1.4 &  97.8 &   0.0 &   0.7 &    75.8 &   0.0 &   2.6 &   99.9 &   0.0 &   3.8 \\
\rowfont{\color{Gray}}
ProoD-Disc &    -   &    56.1 &  41.1 &  43.1 &  61.0 &  50.5 &  51.8 &    70.4 &  57.5 &  58.8 &   29.6 &  20.9 &  20.8 \\
ProoD $\Delta\!=\! 5$    &  76.51 &    76.6 &  \bf{13.7} &  \bf{14.1} &  91.5 &  \bf{16.9} &  \bf{16.9} &  \bf{100.0} &  \bf{18.1} &  \bf{18.2} &   98.9 &  \bf{8.1} &  \bf{8.1} \\
\midrule
In: R.ImgNet & & \multicolumn{3}{c}{Flowers} & \multicolumn{3}{c}{FGVC} & \multicolumn{3}{c}{Cars} & \multicolumn{3}{c}{Smooth} \\
{} &    \small{Acc} &   \small{AUC} &  \small{GAUC} &  \small{AAUC} &   \small{AUC} &  \small{GAUC} &  \small{AAUC} &     \small{AUC} &  \small{GAUC} &  \small{AAUC} &    \small{AUC}&  \small{GAUC} &  \small{AAUC} \\
\midrule
Plain      &  96.34 &    92.3 &   0.0 &   0.0 &  92.6 &   0.0 &   0.0 &  92.7 &   0.0 &   0.0 &   \bf{98.9} &   0.0 &   0.0 \\
OE         &  97.10 &    \bf{96.9} &   0.0 &   0.2 &  99.7 &   0.0 &   0.0 &  \bf{99.9} &   0.0 &   0.0 &   98.0 &   0.0 &   0.0 \\
\rowfont{\color{Gray}}
ProoD-Disc &    -     &    81.5 &  60.4 &  61.4 &  92.8 &  78.0 &  80.8 &  90.7 &  76.3 &  79.2 &   81.0 &  47.3 &  53.7 \\
ProoD $\Delta\!=\! 4$    &  \bf{97.25} &    \bf{96.9} &  \bf{42.8} &  \bf{45.0} &  \bf{99.8} &  \bf{57.0} &  \bf{59.4} &  \bf{99.9} &  \bf{56.0} &  \bf{58.7} &   98.6 &  \bf{31.6} &  \bf{36.3} \\
\bottomrule
    \end{tabu}
         \begin{flushleft}\small{*Uses different architecture of classifier, see ``Baselines'' in Section~\ref{sec:eval}.}\end{flushleft}
\end{table}

\section{Error Bars}\label{App:ErrorBars}
In order to be mindful of our resource consumption we restrict the computation of error bars to our experiments on CIFAR10. Additionally, because the dataloader was much faster we ran these experiments using 80M Tiny Images as an out-distribution as opposed to OpenImages. We reran our experiments using the same hyperparameters $5$ times. We computed the mean and the standard deviations for our models for all metrics shown in Table~\ref{Tab:OpenImagesResults}. The results are shown in Table~\ref{Tab:ErrorBars}. We see that the fluctuations across different runs are indeed rather small. Furthermore, the clean performance of OE and \method{} show no significant discrepancies.

\begin{table}[th]
    \centering
    \caption{\textbf{Error bars:} We show the mean and standard deviation $\sigma$ of all metrics for our CIFAR10 models across $5$ runs. The tolerances for \method{}'s clean performance are very small and yet the differences in clean performance between OE \method{} are not significant. }
    \label{Tab:ErrorBars}
    \setlength{\tabcolsep}{1.5pt}
    \begin{tabular}{l|c|ccc|ccc|ccc|ccc}
    \toprule
In: CIFAR10 & & \multicolumn{3}{c}{CIFAR100} & \multicolumn{3}{c}{SVHN} & \multicolumn{3}{c}{LSUN\_CR} & \multicolumn{3}{c}{Smooth} \\
{} &    \small{Acc} &   \small{AUC} &  \small{GAUC} &  \small{AAUC} &   \small{AUC} &  \small{GAUC} &  \small{AAUC} &     \small{AUC} &  \small{GAUC} &  \small{AAUC} &    \small{AUC}&  \small{GAUC} &  \small{AAUC} \\
\midrule
Plain          &  94.91 &     90.0 &   0.0 &   0.6 &  93.9 &   0.0 &   0.1 &    93.4 &   0.0 &   0.7 &   96.7 &   0.0 &   1.2 \\
Plain $\sigma$      &   0.16 &      0.1 &   0.0 &   0.1 &   1.2 &   0.0 &   0.0 &     0.3 &   0.0 &   0.2 &    2.1 &   0.0 &   0.5 \\
\midrule
OE             &  95.56 &     96.1 &   0.0 &   7.6 &  99.4 &   0.0 &   0.4 &    99.6 &   0.0 &  16.7 &   99.6 &   0.0 &   4.3 \\
OE $\sigma$         &   0.04 &      0.1 &   0.0 &   1.5 &   0.1 &   0.0 &   0.2 &     0.1 &   0.0 &   3.5 &    0.3 &   0.0 &   3.7 \\
\midrule
ProoD-Disc     &   - &     67.7 &  61.6 &  62.2 &  75.5 &  68.6 &  69.3 &    76.5 &  70.4 &  70.9 &   87.2 &  77.7 &  78.8 \\
ProoD-Disc $\sigma$ &   - &      0.7 &   0.7 &   0.7 &   1.4 &   1.7 &   1.5 &     1.4 &   1.7 &   1.7 &    3.6 &   4.3 &   4.3 \\
\midrule
ProoD $\Delta\!=\! 3$        &  95.60 &     96.0 &  42.2 &  44.1 &  99.4 &  48.6 &  49.2 &    99.6 &  47.1 &  52.0 &   99.8 &  55.2 &  57.0 \\
ProoD $\Delta\!=\! 3$ $\sigma$    &   0.11 &      0.1 &   0.8 &   0.8 &   0.1 &   0.6 &   0.6 &     0.1 &   1.5 &   1.9 &    0.1 &   2.9 &   3.4 \\
\bottomrule
    \end{tabular}
\end{table}

\section{Combining ProoD with a Robust Classifier}\label{App:RobustModel}
In this work we have provided guarantees on adversarially robust out-of-distribution detection that do not come at the cost of accuracy. This is the main reason why we did not consider adversarial robustness on the in-distribution since this is known to come at the cost of clean accuracy \cite{tsipras2018robustness}. However, it is an interesting question if ProoD can nonetheless be applied to models that are also adversarially robust on the in-distribution. In order to illustrate that it in fact can, we combine an adversarially robust Resnet-18~\cite{gowal2021improving} from RobustBench~\cite{croce2020robustbench} with a robust accuracy of 58.5\% at $l_\infty$ $\epsilon=8/255$ on CIFAR10 with our binary discriminator as described in App.~\ref{App:ProoD-SEP}. We use the same bias shift of $\Delta=3$ as for our ProoD model on CIFAR10. We call the robust model "Robust" and the combined model "Robust-ProoD". Note that because no retraining is necessary, both clean accuracy and robust accuracy of both models are guaranteed to stay the same (ProoD does not change the predictions). We evaluate the adversarial robustness on the out-distribution and report the results in Table~\ref{Tab:RobustModels}.

\begin{table}[th]
    \centering
    \caption{\textbf{Robust models:} We report the OOD detection performance (AUCs, AAUCs and GAUCs) of a model that is adversarially robust on the in-distribution for different test out-distributions. The radius of the $l_\infty$-ball for the adversarial manipulations of the OOD data is $\epsilon=0.01$ for all datasets. }
    \label{Tab:RobustModels}
    \setlength{\tabcolsep}{1.5pt}
    \begin{tabular}{l|c|ccc|ccc|ccc|ccc}
    \toprule
In: CIFAR10 & & \multicolumn{3}{c}{CIFAR100} & \multicolumn{3}{c}{SVHN} & \multicolumn{3}{c}{LSUN\_CR} & \multicolumn{3}{c}{Smooth} \\
{} &    \small{Acc} &   \small{AUC} &  \small{GAUC} &  \small{AAUC} &   \small{AUC} &  \small{GAUC} &  \small{AAUC} &     \small{AUC} &  \small{GAUC} &  \small{AAUC} &    \small{AUC}&  \small{GAUC} &  \small{AAUC} \\
\midrule
Robust       &  87.35 &     82.7 &   0.0 &  75.3 &  90.0 &   0.0 &  84.0 &    89.7 &   0.0 &  82.9 &   \bf{94.5} &   0.0 &  85.7 \\
Robust-ProoD &  87.35 &     \bf{82.9} &  \bf{14.3} &  \bf{75.6} &  \bf{90.4} &  \bf{17.9} &  \bf{84.5} &    \bf{90.1} &  \bf{18.8} &  \bf{83.4} &   94.4 &  \bf{10.3} &  \bf{85.8} \\
\bottomrule
    \end{tabular}
\end{table}

We can see that while the robust model already has remarkably strong empirical robustness on the out-distribution, ProoD does not harm the OOD detection performance of the model - neither clean nor adversarial. In fact, in all but a single case ProoD strictly improves the results (except for a $0.1\%$ drop in clean AUC on LSUN\_CR). In addition to this, ProoD provides non-zero GAUCs as well as the guarantees on asymptotically low confidence. The fact that ProoD operates so well in a regime for which it was not designed highlights its versatility.

\end{document}